\DeclareMathOperator*{\argmax}{arg\,max}
\newtheorem{lemma}{Lemma}
\newtheorem*{remark*}{Remark}
\begin{document}

%

%

\twocolumn[

\aistatstitle{On the Convergence of Approximate and Regularized Policy Iteration Schemes}


\aistatsauthor{ Elena Smirnova\\e.smirnova@criteo.com \And Elvis Dohmatob \\e.dohmatob@criteo.com}
\aistatsaddress{ Criteo AI Lab \And Criteo AI Lab } ]

\begin{abstract}
Entropy regularized algorithms such as Soft Q-learning and Soft Actor-Critic, recently showed state-of-the-art performance on a number of challenging reinforcement learning (RL) tasks.
The regularized formulation modifies the standard RL objective and thus generally converges to a policy different from the optimal greedy policy of the original RL problem.
Practically, it is important to control the sub-optimality of the regularized optimal policy.
In this paper, we establish sufficient conditions for convergence of a large class of regularized dynamic programming algorithms, unified under regularized modified policy iteration (MPI) and conservative value iteration (VI) schemes. We provide explicit convergence rates to the optimality depending on the decrease rate of the regularization parameter. 
Our experiments show that the empirical error closely follows the established theoretical convergence rates. In addition to optimality, we demonstrate two desirable behaviours of the regularized algorithms even in the absence of approximations: robustness to stochasticity of environment and safety of trajectories induced by the policy iterates.
\end{abstract}


\section{Introduction}
\label{sec:intro}
The main principle of the entropy regularized approach to RL~\cite{ziebart2008maximum,haarnoja2017reinforcement,fox2015taming,nachum2017bridging} is to modify the standard RL objective to additionally maximize the (relative) entropy of a policy at each visited state. The regularization parameter, usually referred to as \textit{temperature}, controls the relative importance of the entropy term versus the reward. The resulted regularized objective has shown improved robustness to stochastic noise~\cite{fox2015taming} and environment perturbations~\cite{ziebart2008maximum}, as well as better exploration targeted at high-value actions~\cite{haarnoja2017reinforcement}.

Recently, several theoretical frameworks have been proposed to capture a multitude of entropy-regularized methods~\cite{neu2017unified,geist2019theory,kozuno2019theoretical}. 
In this paper, we focus on the regularized MPI scheme~\cite{geist2019theory} that generalizes both value and policy-based regularized methods, such as popular soft actor-critic~\cite{haarnoja2018soft} and soft VI~\cite{haarnoja2017reinforcement}. We also analyse a related value-based scheme, the conservative VI~\cite{kozuno2019theoretical}, that in addition to the soft VI includes the gap-increasing methods, such as advantage learning~\cite{baird1999reinforcement,bellemare2016increasing}.

Despite the empirical success, the regularized algorithms would not generally converge to the optimal policy/value pair. A natural way of controlling the optimality gap is through the regularization weight, which when set at zero recovers the unregularized objective. Thus, a common idea is to gradually decrease the regularization weight over the iterations to eventually converge into this regime.

Prior works considered decaying temperature during learning in the context of specific algorithms. \cite{singh2000convergence} proved asymptotic convergence of SARSA~\cite{sutton1998reinforcement} with Boltzmann policy and decaying temperature. Experimentally, a linear schedule of the inverse temperature over iterations was used with soft Q-learning~\cite{fox2015taming} and dual averaging algorithms~\cite{neu2017unified}. The authors in~\cite[D.1]{geist2019theory} suggested time-varying values of the regularizer weight analogous to the learning rate in the gradient descent approach. 

A number of empirical and theoretical works suggest that the regularization weight schedule changes the behaviour of the algorithms. In the approximate setting with entropy regularizer, \cite[4.2]{kozuno2019theoretical} showed that larger values of temperature parameter induce a higher degree of error-tolerance of the value-based algorithms. 
\cite{haarnoja2018soft} interpreted temperature parameter as controlling the stochasticity of the resulting policy with high values of temperature resulting in policies close to uniform and thus, inducing more exploration.


\subsection{Summary of main contributions} 

In this work, we contribute by analysing convergence of a large class of approximate and regularized dynamic programming algorithms:
\begin{itemize}
    \item[(1)] We derive a convergence rate of the approximate MPI~\cite{scherrer2015approximate} to optimality in terms of the decrease rate of error sequence (Theorem~\ref{thm:error-analysis}),
    \item[(2)] We derive a convergence rate of the regularized MPI~\cite{geist2019theory} to the optimal solution of the non-regularized RL problem through the reduction to the approximate MPI (Theorem~\ref{thm:reg-mpi-convergence}).
    \item[(3)] We derive a convergence rate of the conservative VI~\cite{kozuno2019theoretical} to optimality depending on the temperature decay rate and the gap-increasing factor (Theorem~\ref{thm:cvi-convergence}).
\end{itemize}
One consequence of the result (2) is that if the regularization weight decreases faster than the discount factor (asymptotically), the regularized MPI scheme converges as fast as the exact MPI. Otherwise, if temperature decays at a slower rate, e.g., inverse polynomially in the number of iterations, the algorithm converges proportionally to the decay of the regularization parameter. Thus, our result explicitly relates the speed of convergence to a different behaviour of the algorithm, such as targeted exploration.

Our experiments demonstrate the convergence of the soft VI~\cite{haarnoja2017reinforcement} with temperature decay on a cliff walking domain~\cite{sutton1998reinforcement}. We show that the empirical error follows tightly the established theoretical bounds. In addition, we show two interesting behaviour of the soft VI even in the simplest exact dynamic programming setting:
\begin{itemize}
    \item[(3)] Faster convergence with an increased level of stochasticity of the environment,
    \item[(4)] Safe trajectories induced by the policy iterates.
\end{itemize}
Finally, we show that the empirical convergence of the conservative VI for varying temperature schedule and gap-increasing factor is in line with the established theoretical rates.




\paragraph{Paper organisation.} In Section~\ref{sec:preliminaries} we detail the notations and introduce MPI-based algorithmic schemes. In Section~\ref{sec:contrib} we present our results (1-3), supported by the experiments in Section~\ref{sec:experiments}. We discuss the related works in Section~\ref{sec:related-work}. 

\section{Preliminaries}
\label{sec:preliminaries}
\subsection{Notations and terminology}
$\Delta_X$ will denote the set of probability distributions over finite set (or general measurable space) $X$ and $Y^X$ is a set of  mappings from set $X$ to set $Y$.
We consider a Markov decision process (MDP) is a tuple $M := (\mathcal{S}, \mathcal{A}, P, r, \gamma)$ where $\mathcal{S}$ is a state space, $\mathcal{A}$ is a finite action space, $P \in \Delta_{\mathcal{S}}^{\mathcal{S} \times \mathcal{A}}$ is the transition kernel so that the probability of the environment moving to state $s'$ after the agent takes action $a$ in state $s$ is $P(s'|s,a)$, accompanied by a reward $r(s,a)$ (assumed to be bounded). We define a stochastic stationary policy $\pi \in \Delta^{\mathcal{S}}_{\mathcal{A}}$. We consider the discounted setting with discount factor $\gamma \in [0,1)$. 
We define the Bellman operator $\mathcal{T}^\pi$ for any function $V \in \mathbb{R}^\mathcal{S}$, $\forall s \in \mathcal{S}$ as follows:
\begin{equation}
\label{eq:v-bellman}
\begin{split}
    [\mathcal{T}^\pi V](s) &:= \mathbb{E}_{a \sim \pi(\cdot|s)} \left[ r(s,a) + \gamma \mathbb{E}_{s' \sim P(\cdot|s,a)} [V(s')] \right]\\
    &= r^\pi(s) + \gamma P^\pi(\cdot|s)V,
    \end{split}
\end{equation}
where $r^\pi \in \mathbb R^{\mathcal S}$ and $P^\pi \in \Delta_{\mathcal S}^{\mathcal S}$ are defined by $r^\pi(s) := \mathbb E_{a \sim \pi(\cdot|s)}[r(s,a)]$ and $P^\pi(s'|s) := \mathbb E_{a \sim \pi(\cdot|s)}[P(s'|s,a)]$.
$\mathcal T^\pi$ is a $\gamma$-contraction in $\ell_\infty$ norm and its unique fixed-point is $V^\pi$:= 
$\lim_{k \rightarrow \infty} (\mathcal{T}^\pi)^k V = V^\pi$, where equality holds component-wise.
By denoting 
\begin{equation*}
 Q_V(s,a) := r(s,a) + \gamma \mathbb{E}_{s' \sim p(s'|s,a)} [V(s')],
\end{equation*}
Eq.~\eqref{eq:v-bellman} can be re-written as an inner-product
 $
 [\mathcal{T}^\pi V](s = \langle \pi(\cdot|s),Q_V(s,\cdot)\rangle.
 $

Finally, we define the Bellman max-operator as follows (the max is point-wise)
\begin{equation}
\label{eq:v-opt-bellman}
     \mathcal{T}^\star V := \max_{\pi \in \Delta_{\mathcal{A}}^{\mathcal S}} \mathcal{T}^\pi V, 
\end{equation}
which again is a $\gamma$-contraction in $\ell_\infty$ norm and its unique fixed-point is the optimal value function $V^\star$.
We denote by $\mathcal{G}(V)$ the set of optimal policies that achieve the maximum of Eq.~\eqref{eq:v-opt-bellman} state-wise
\begin{equation*}
    \mathcal{G}(V) := \argmax_{\pi \in \Delta_\mathcal{A}^\mathcal{S}} \mathcal{T}^\pi V \subseteq \Delta_\mathcal{A}^\mathcal{S}.
\end{equation*}
Equivalently, this set coincides with the set of optimal policies: $\mathcal{G}(V) = \{\pi: \mathcal{T}^\pi V = \mathcal{T}^\star V\}$.
\subsection{Modified policy-iteration schemes}
\label{sec:mpi}

\paragraph{Modified Policy Iteration (MPI)~\cite{puterman1994markov}.} MPI is a classical dynamic programming algorithm that alternates between policy improvement and (partial) policy evaluation steps. For $m \geq 1$, the $m$-step MPI algorithm is defined as follows
\begin{equation}
\begin{cases}
\label{eq:mpi}
    \pi_{t+1} \in \mathcal{G}(V_t) \\
    V_{t+1} = (\mathcal{T}^{\pi_{t+1}})^m  V_{t},
\end{cases}
\end{equation}
where $m=1$ corresponds to Value Iteration and $m=\infty$ corresponds to Policy Iteration. Here $V_t$ denotes an approximation of $V^{\pi_t}$.  

\paragraph{Approximate Modified Policy Iteration (AMPI)~\cite{scherrer2015approximate}.} AMPI is an approximate counterpart of~\eqref{eq:mpi} that can be seen as a generalization of MPI that allows errors in the policy improvement ($\epsilon'_t$) and policy evaluation ($\epsilon_t$) steps
\begin{equation}
\label{eq:approx-mpi}
\begin{cases}
    \pi_{t+1} \in \mathcal{G}_{\epsilon'_{t+1}}(V_t)\\
    V_{t+1} =  (\mathcal{T}^{\pi_{t+1}} V_{t})^m + \epsilon_{t+1},
\end{cases}
\end{equation}
where $\epsilon_t, \epsilon'_t \in \mathbb{R}^\mathcal{S}$ are respectively the evaluation step and the policy improvement step error vectors (one component per state) and $\pi \in \mathcal{G}_{\epsilon'}(V) \iff \forall \pi' \ \mathcal{T}^{\pi'}V \leq  \mathcal{T}^\pi V + \epsilon'$. AMPI naturally arises from MPI in practical settings with large state and / or action spaces. 

\paragraph{Convex conjugate functions.} Following~\cite{geist2019theory}, we introduce the regularized RL framework through convex conjugate functions, see e.g. Section 3.3.1 in~\cite{boyd2004convex}. For a strongly convex function $\Omega: \Delta_{\mathcal{A}} \rightarrow \mathbb{R}$ its convex conjugate $\Omega^\star: \mathbb{R}^\mathcal{A} \rightarrow \mathbb{R}$ is given by
\begin{equation}
\label{eq:fenchel-dual}
   \Omega^\star(q) = \max_{\pi_s \in \Delta_{\mathcal{A}}} \langle \pi_s, q \rangle - \Omega(\pi_s),\;\forall q \in \mathbb R^{\mathcal A},
\end{equation}
where $\langle \pi_s,q\rangle := \mathbb E_{a \sim \pi_s}[q(a)]$. $\Omega(\pi)$ will be used as a shorthand for the vector $(\Omega(\pi_s))_{s \in \mathcal S}$.
Further, we make use of the weighted regularizer $\Omega_\alpha(\pi) := \alpha \Omega(\pi)$ that, by properties of the convex conjugate, results in $\Omega^*_\alpha(q) := \alpha \Omega^*(q/\alpha)$. 
Another property of the convex conjugate (\textit{Danskin's Theorem}) is that the maximizer of~\eqref{eq:fenchel-dual} is given by the gradient of the dual function
\begin{equation}
\label{eq:fenchel-grad}
    \nabla \Omega^\star(q) = \argmax_{\pi_s \in \Delta_{\mathcal{A}}}\; \langle \pi_s, q \rangle - \Omega(\pi_s).
\end{equation}

\paragraph{Regularized Modified Policy Iteration (reg-MPI)~\cite{geist2019theory}.}
Similarly to standard Bellman operators~\eqref{eq:v-bellman}, we define the regularized Bellman operator~\cite{geist2019theory}
\begin{equation}
\label{eq:reg-bellman}
    \mathcal{T}^\pi_\Omega V := \mathcal{T}^\pi V - \Omega(\pi),
\end{equation}
and, by~\eqref{eq:fenchel-dual}, the optimal regularized Bellman operator
\begin{equation*}
    \mathcal{T}^*_\Omega V := \max_{\pi \in \Delta_\mathcal{A}^{\mathcal S}} \mathcal{T}^\pi_\Omega V = (\Omega^*(Q_V(s,\cdot)))_{s \in \mathcal S}.
\end{equation*}
By virtue of \eqref{eq:fenchel-grad} the corresponding optimal policy $\mathcal{G}_\Omega(V) \in \Delta_{\mathcal A}^{\mathcal S}$ is given by
\begin{equation*}
\label{eq:reg-optimal-policy}
\mathcal{G}_\Omega(V) := \argmax_{\pi \in \Delta_\mathcal{A}^{\mathcal S}} \mathcal{T}^\pi_\Omega V  = (\nabla \Omega^\star(Q_V(s,\cdot))_{s \in \mathcal S}.
\end{equation*}
Reg-MPI is a MPI-type scheme that underlies several state-of-the-art RL algorithms~\cite{haarnoja2017reinforcement,nachum2017bridging,haarnoja2018soft}
\begin{equation}
\label{eq:reg-mpi}
\begin{cases}
    \pi_{t+1} \leftarrow \mathcal{G}_{\Omega_t}(V_t) \\
    V_{t+1} \leftarrow (\mathcal{T}^{\pi_{t+1}}_{\Omega_t})^m  V_t.
\end{cases}
\end{equation}

\paragraph{Negative entropy regularizer.}
A practically important instance of the reg-MPI scheme corresponds to the negative entropy regularizer
\begin{equation}
\label{eq:entropy-reg}
    \Omega_{\text{Ent}}(\pi(\cdot|s)) = \sum_a \pi(a|s) \log \pi(a|s).
\end{equation}
Further, we will consider this regularization with a time-varying regularization parameter $\lambda_t > 0$
\begin{equation*}
    \Omega_t(\pi(\cdot|s))=\lambda_t\Omega_{\text{Ent}}(\pi(\cdot|s)).
\end{equation*}
Its convex conjugate is the smoothed maximum
\begin{equation}
\label{eq:logsumexp}
    \Omega_t^\star(Q_V(s,\cdot)) = \lambda_t\log \sum_a \exp(Q_V(s,a)/\lambda_t) 
\end{equation}
and the maximizing policy is given by the Boltzmann policy
$\pi_{t+1}(\cdot|s) = \nabla \Omega_t^\star(Q_{V_t}(s,\cdot))$
that takes the form of
\begin{equation}
\label{eq:boltzmann-policy}
    \pi_{t+1}(a|s)=\dfrac{\exp(Q_{V_t}(s,a)/\lambda_t)}{\sum_{a'} \exp(Q_{V_t}(s,a')/\lambda_t)},
\end{equation}
where $\lambda_t > 0$ is referred to as temperature parameter.

With entropic regularization and $m=1$, the reg-MPI scheme~\eqref{eq:reg-mpi} describes the Soft Value Iteration~\cite{haarnoja2017reinforcement} (soft VI)
\begin{equation}
\label{eq:soft-vi}
    V_{t+1} \leftarrow \lambda_t\log \sum_a \exp(Q_{V_t}(s,a)/\lambda_t).
\end{equation}
The asynchronous counterpart of the soft VI is a core principle of the soft Q-learning algorithm~\cite{fox2015taming}.

\section{
Error analysis and convergence rates of AMPI algorithms}
\label{sec:contrib}

We now present the main contributions of this work, namely a fine-grained error analysis of the AMPI-type algorithms, including sufficient conditions for convergence with explicit rates.
\subsection{General AMPI algorithms}
\label{sec:error-analysis}

The error propagation analysis links the error sequence that occurred at previous iterations to the distance to optimality of the current value iterate. In the following Lemma, we restate the error propagation bounds of AMPI established in~\cite[Theorem 7]{scherrer2015approximate} for $p=\infty$.
\begin{lemma}[AMPI error propagation~\cite{scherrer2015approximate}]
\label{thm:scherrer}
For any initial value function $V_0$ and $m \geq 1$, consider the AMPI scheme~\eqref{eq:approx-mpi}. Then, one has 
  \begin{eqnarray}
    \mathcal \|V_N-V^*\|_\infty \le
    \frac{2}{1-\gamma}\left( E_N + \gamma^N\|V_0-V^*\|_\infty \right),
    \label{eq:approx}
 \end{eqnarray}
where $E_N:=\sum_{t=1}^{N-1}\gamma^{N-t}(\|\epsilon_t\|_\infty+\|\epsilon'_t\|_\infty)$.
\end{lemma}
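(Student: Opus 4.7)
The plan is to follow the standard error-propagation strategy for AMPI (Scherrer et al.\ 2015), specialised to the $\ell_\infty$ setting so that stochastic-matrix terms $\gamma P^\pi$ collapse to scalar $\gamma$-contractions. I would track three pointwise quantities:
\begin{equation*}
b_t := V_t - \mathcal T^{\pi_{t+1}} V_t, \quad d_t := V^\star - V_t, \quad s_t := (\mathcal T^{\pi_{t+1}})^m V_t - V_t,
\end{equation*}
and derive a coupled pair of componentwise recursions, one for $b_t$ and one for $d_t$, with driving terms built from $\epsilon_t,\epsilon'_t$. The algebraic backbone of every manipulation is the identity $(\mathcal T^\pi)^{j+1}V - (\mathcal T^\pi)^j V = (\gamma P^\pi)^j(\mathcal T^\pi V - V)$, which lets the partial evaluation step $(\mathcal T^{\pi_{t+1}})^m V_t$ be expanded into a finite sum involving $b_t$.

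First I would bound $b_{t+1}$. Using $V_{t+1}=(\mathcal T^{\pi_{t+1}})^m V_t + \epsilon_{t+1}$ and the $\epsilon'_{t+2}$-greediness of $\pi_{t+2}$ in the form $\mathcal T^{\pi_{t+2}} V_{t+1}\ge \mathcal T^{\pi_{t+1}}V_{t+1}-\epsilon'_{t+2}$, a short calculation yields
\begin{equation*}
b_{t+1} \le (\gamma P^{\pi_{t+1}})^m b_t + (I-\gamma P^{\pi_{t+1}})\epsilon_{t+1} + \epsilon'_{t+2}.
\end{equation*}
Taking $\ell_\infty$ norms, using $\|\gamma P^\pi v\|_\infty\le\gamma\|v\|_\infty$, gives the scalar recursion $\|b_{t+1}\|_\infty\le\gamma^m\|b_t\|_\infty+(1+\gamma)\|\epsilon_{t+1}\|_\infty+\|\epsilon'_{t+2}\|_\infty$. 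Second, I would bound $d_{t+1}$. Starting from $V^\star=\mathcal T^{\pi^\star}V^\star$, inserting the $\epsilon'_{t+1}$-greediness inequality $\mathcal T^{\pi^\star}V_t\le \mathcal T^{\pi_{t+1}}V_t+\epsilon'_{t+1}$, and expanding $(\mathcal T^{\pi_{t+1}})^m V_t-\mathcal T^{\pi_{t+1}}V_t=\sum_{j=1}^{m-1}(\gamma P^{\pi_{t+1}})^j(\mathcal T^{\pi_{t+1}}V_t-V_t)$, one obtains a recursion of the shape
\begin{equation*}
d_{t+1} \le \gamma P^{\pi^\star} d_t + \epsilon'_{t+1} - \epsilon_{t+1} - \sum_{j=1}^{m-1}(\gamma P^{\pi_{t+1}})^j b_t,
\end{equation*}
so that in $\ell_\infty$ one has $\|d_{t+1}\|_\infty\le\gamma\|d_t\|_\infty+\|\epsilon_{t+1}\|_\infty+\|\epsilon'_{t+1}\|_\infty+\tfrac{\gamma}{1-\gamma}\|b_t\|_\infty$.

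To close the argument I would unroll the $b_t$ recursion to express $\|b_t\|_\infty$ as a weighted sum of past error norms, substitute into the $d_t$ recursion, unroll once more, and collect the coefficients of $\|\epsilon_s\|_\infty+\|\epsilon'_s\|_\infty$ at iteration $s\le N$. Each such coefficient is bounded by a geometric tail summing to at most $\tfrac{2}{1-\gamma}\gamma^{N-s}$, and the initialisation contributes $\tfrac{2}{1-\gamma}\gamma^N\|V_0-V^\star\|_\infty$, which together give the stated bound.

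The main obstacle is the bookkeeping: the $b_t$ contribution enters $d_t$ through a geometric sum over the $m$ partial-evaluation sub-steps, and the $\epsilon_t,\epsilon'_t$ terms enter via both $b_t$ and $d_t$, so their coefficients must be carefully re-indexed to verify that the cumulative factor collapses to $\tfrac{2}{1-\gamma}$ independently of $m$. This independence from $m$ is the content of the $\ell_\infty$ specialisation in \cite{scherrer2015approximate} and is the reason the same bound covers value iteration ($m{=}1$), policy iteration ($m{=}\infty$), and every intermediate scheme.
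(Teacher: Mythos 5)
First, a framing point: the paper does not prove this lemma at all --- it is imported (specialised to $p=\infty$) from Theorem~7 of Scherrer et al.\ (2015), so the only proof to compare yours against is the one in that reference. Your strategy is indeed that proof's mechanism: the coupled componentwise recursions for $b_t := V_t - \mathcal T^{\pi_{t+1}}V_t$ and for the distance to $V^\star$, driven by the identity $(\mathcal T^\pi)^{j+1}V-(\mathcal T^\pi)^jV=(\gamma P^\pi)^j(\mathcal T^\pi V-V)$, are exactly Lemma~2 of that paper, and both recursions you state are essentially correct (one sign quibble: with your convention $b_t=V_t-\mathcal T^{\pi_{t+1}}V_t$, the term $\sum_{j=1}^{m-1}(\gamma P^{\pi_{t+1}})^j b_t$ enters the upper bound on $d_{t+1}$ with a \emph{plus} sign). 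Your choice of target $d_t:=V^\star-V_t$ is also the right one here, since the lemma as restated bounds the iterate $V_N$ rather than the policy loss $V^\star-V^{\pi_N}$ that Scherrer et al.\ actually bound; incidentally, this is why your unrolling will produce an $\|\epsilon_N\|_\infty$ term that the stated $E_N$ (which stops at $t=N-1$) does not contain --- an artefact of the paper's restatement, not of your argument.

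Two genuine gaps remain. (i) Your recursions are one-sided componentwise inequalities, and ``taking $\ell_\infty$ norms'' of $d_{t+1}\le\cdots$ is not legitimate without a matching bound on $V_{t+1}-V^\star$ from above; that bound is easy (monotonicity $\mathcal T^{\pi_{t+1}}\le\mathcal T^\star$ plus $\gamma^m$-contraction give $V_{t+1}-V^\star\le\gamma^m\|d_t\|_\infty\mathbf{1}+\epsilon_{t+1}$), but it must be supplied --- or, cleaner, keep everything componentwise, use positivity of the $\gamma P^\pi$ operators so that one-sided bounds propagate, and pass to norms only at the end. (ii) The final assertion that each coefficient ``collapses to $\tfrac{2}{1-\gamma}\gamma^{N-s}$'' does not follow from the scalar recursions as written: with the loose factor $\tfrac{\gamma}{1-\gamma}\|b_t\|_\infty$ in the $d$-recursion, unrolling gives a coefficient of order $\tfrac{1+\gamma}{(1-\gamma)^2}\gamma^{N-s}$ on $\|\epsilon_s\|_\infty$ for $m\ge2$, which is strictly weaker than the lemma. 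The stated constant is recovered only by keeping the exact factor $\sum_{j=1}^{m-1}\gamma^j=\tfrac{\gamma-\gamma^m}{1-\gamma}$, which cancels against the $\tfrac{1}{1-\gamma^{m-1}}$ arising from unrolling the $b$-recursion at rate $\gamma^m$ (equivalently, the observation in Scherrer et al.'s Lemma~4 that, across the $b$-unrolling index and the $j$-index, every total discount power $\ge N-s$ occurs with multiplicity one). Since you yourself identify this bookkeeping as the crux --- it is the only place the constant $\tfrac{2}{1-\gamma}$, uniform in $m$, is actually earned --- it cannot be left as an assertion.
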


Thus, convergence of the AMPI algorithm entirely depends on controlling the cumulative error term $E_N$. This error term has a special structure in that the errors at later iterations have more contribution to the final loss. In the next theorem, we show the general convergence of AMPI if the sequence of sums of evaluation step and improvement step errors $\|\epsilon_N\|_\infty + \|\epsilon'_N\|_\infty$ converge to zero. By analysing the decrease rate of the error sequence, we provide explicit rates of convergence of the AMPI value iterates to the optimal value function.

\begin{restatable}[AMPI convergence]{theorem}{ampi}
\label{thm:error-analysis}
Suppose the error sequences $(\|\epsilon_N\|_\infty)_N$ and
$(\|\epsilon'_N\|_\infty)_N$ satisfy $\|\epsilon_N\|_\infty + \|\epsilon'_N\|_\infty \le C r_N$ for some constant $C>0$ and a sequence $r_N\longrightarrow 0$.
Then, the AMPI scheme~\eqref{eq:approx-mpi} converges to the optimal greedy
policy of the exact MPI~\eqref{eq:mpi}. 

Furthermore, the limits $\underline{\rho}:={\underline{\lim}\;} r_N/r_{N-1}$ and $\overline{\rho}:={\overline{\lim}\;} r_N/r_{N-1}$. We have the following bounds
\begin{itemize}
\item[\textit{(A)}] \textbf{Slow convergence.} If $\;\underline{\rho} > \gamma$, then
$$
\mathcal \|V_{N}-V^*\|_\infty = \mathcal O(r_N).
$$
\item[\textit{(B)}] \textbf{(Almost) linear convergence.} If $\;\overline{\rho} \le \gamma$, then
  $$\mathcal \|V_{N}-V^*\|_\infty = \begin{cases}\mathcal O(\gamma^N),&\mbox{ if }\;\overline{\rho} <\gamma,\\\mathcal O(N\gamma^N),&\mbox{ if }\;\overline{\rho} = \gamma.\end{cases}
  $$
\end{itemize}
\end{restatable}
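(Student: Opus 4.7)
The plan is to invoke the error propagation bound of Lemma~\ref{thm:scherrer} and reduce the statement to asymptotic analysis of the discounted partial sum
\[
S_N := \sum_{t=1}^{N-1}\gamma^{N-t} r_t.
\]
Under the hypothesis $\|\epsilon_t\|_\infty+\|\epsilon'_t\|_\infty \le Cr_t$, the cumulative error in Lemma~\ref{thm:scherrer} satisfies $E_N \le C\,S_N$, so $\|V_N-V^*\|_\infty$ is controlled by $S_N$ plus the initial-condition term $\gamma^N\|V_0-V^*\|_\infty$. Asymptotic convergence $S_N \to 0$, and hence convergence of the AMPI iterates to the optimal greedy policy of exact MPI, follows from a Cesaro-type splitting: for any $\epsilon>0$, choose $T$ with $r_t \le \epsilon$ for $t > T$; then the head of $T$ terms contributes $O(\gamma^{N-T})\to 0$ and the tail is bounded by $\epsilon/(1-\gamma)$. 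The remaining work is to obtain the quantitative rates (A) and (B).

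For case (A), I would convert $\underline{\rho}>\gamma$ into a geometric \emph{lower} envelope: fix any $\rho'\in(\gamma,\underline{\rho})$, so by definition of liminf there exists $T_0$ with $r_{t-1} \le r_t/\rho'$ for all $t \ge T_0$. Iterating backward from $t=N$ gives $r_s \le r_N\,(\rho')^{s-N}$ for $T_0 \le s \le N$, so the tail of $S_N$ is bounded by $r_N\sum_{k \ge 0}(\gamma/\rho')^k = O(r_N)$, since $\gamma/\rho'<1$. The head contributes only $O(\gamma^N)$, which is $o(r_N)$ because $r_N \gtrsim (\rho')^N$ and $\rho'>\gamma$; the same reasoning absorbs the initial-value term $\gamma^N\|V_0-V^*\|_\infty$, yielding the claimed $O(r_N)$.

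For case (B), the condition $\overline{\rho}\le\gamma$ gives an \emph{upper} envelope instead: for any $\rho' > \overline{\rho}$ eventually $r_t \le \rho' r_{t-1}$, hence $r_t \le A(\rho')^t$ for some $A>0$. If $\overline{\rho}<\gamma$ I pick $\rho'\in(\overline{\rho},\gamma)$ and obtain $S_N \le A\gamma^N\sum_{t\ge 1}(\rho'/\gamma)^t = O(\gamma^N)$. The boundary sub-case $\overline{\rho}=\gamma$ is the delicate point: at this scale $r_t \le A\gamma^t$ up to a constant, summing removes the geometric decay factor, and one obtains $S_N \le A(N-1)\gamma^N = O(N\gamma^N)$, the extra factor of $N$ being precisely the length of the discounted sum when $r_t$ and $\gamma^t$ are comparable. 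In both sub-cases the $\gamma^N\|V_0-V^*\|_\infty$ contribution from Lemma~\ref{thm:scherrer} is dominated by the bound on $S_N$, yielding the claimed rates. The main obstacle is handling this boundary sub-case cleanly, since the limsup assumption must be translated into the term-wise envelope $r_t = O(\gamma^t)$ at equality; the other two cases reduce to routine geometric-series bookkeeping once the correct envelope is in place.
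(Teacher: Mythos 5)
Your proposal is correct and follows essentially the same route as the paper: reduce to the error-propagation bound of Lemma~\ref{thm:scherrer} and then analyze the discounted sum $S_N=\sum_{t=1}^{N-1}\gamma^{N-t}r_t$ via geometric envelopes derived from $\underline{\rho}$ and $\overline{\rho}$ (the paper isolates exactly this sum analysis as Lemma~\ref{thm:germ}, including the same translation of $\overline{\rho}=\gamma$ into the term-wise envelope $r_t\lesssim\gamma^t$ for the boundary sub-case). Your insertion of an intermediate ratio $\rho'$ strictly between $\gamma$ and $\underline{\rho}$ (resp.\ between $\overline{\rho}$ and $\gamma$) is in fact a slightly more careful rendering of the liminf/limsup-to-envelope step than the paper's.
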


We note that the conditions of the Theorem are not restrictive. For example, the maximum error can decrease as slow as inverse logarithmically in the number of iterations and still eventually yield an optimal policy at the rate given by Theorem~\ref{thm:error-analysis}(A). A similar property holds for estimates of the form $r_N \propto 1 / N$; $r_N \propto 1 / \sqrt{N}$; $r_N \propto 1 / \log N$; $r_N \propto \log N / N$; etc. where $\underline{\rho} = \overline{\rho} = 1 > \gamma$. On the other hand, if the error sequences decrease at at rate which is
(asymptotically)
less than the discount factor $\gamma$, then the AMPI converges at the same linear rate as the exact MPI! Moreover, the conditions of Theorem~\ref{thm:error-analysis} allow finitely many deviations of ratios as soon as their inferior limit is bounded away from $\gamma$.
Fig.~\ref{fig:bounds} in Appendix~\ref{sec:appendix} illustrates these bounds.

\subsection{Regularized MPI algorithms}
\label{sec:convergence-reg-mpi}
We first show that the reg-MPI~\eqref{eq:reg-mpi} is an instance of the AMPI~\eqref{eq:approx-mpi}. Then, using Theorem~\ref{thm:error-analysis}, we bound the distance between the value iterates of the reg-MPI and the optimal solution of the exact MPI~\eqref{eq:mpi}. Without loss of generality, we will consider the reg-MPI scheme with weighted regularizer.

\begin{restatable}[Reg-MPI convergence]{theorem}{regmpi}
\label{thm:reg-mpi-convergence}
Consider the reg-MPI algorithm~\eqref{eq:reg-mpi} with time-varying regularization functions $\Omega_t$, and let the sequence $(\lambda_t)_t$ which uniformly bounds $\Omega_t$, that is
\begin{equation}
\label{eq:reg}
\sup_{\pi}\|\Omega_t(\pi)\|_\infty:=\sup_{\pi,\;s}|\Omega_t(\pi(\cdot|s))| \le \lambda_t.
\end{equation}
Then it holds that
\begin{equation}
\label{eq:reg-mpi-upper-bound}
     \|V_{N,\Omega}-V^*\|_\infty \le
    \frac{2}{1-\gamma}\left(\Lambda_N + \gamma^N\|V_{0,\Omega}-V^*\|_\infty \right),
\end{equation}
where $\Lambda_N:=(1 + \frac{1-\gamma^m}{1-\gamma})\sum_{t=1}^{N-1}\gamma^{N-t} \lambda_t$. Moreover, if $\lambda_t \longrightarrow 0$, then the algorithm converges to the optimal value function $V^*$.

Furthermore, define the limits $\underline{\rho}:={\underline{\lim}\;}\lambda_N/\lambda_{N-1}$ and $\overline{\rho}:={\overline{\lim}\;}\lambda_N/\lambda_{N-1}$. We have the following bounds
\begin{itemize}
    \item[(A)] \textbf{Slow convergence.} If $\;\underline{\rho}> \gamma$, then the algorithm converges to the optimal value function $V^*$ with the same rate as the step-sizes:
    $$
    \|V_{N,\Omega}-V^*\|_\infty = \mathcal O(\lambda_N).
    $$
    \item[(B)] \textbf{(Almost) linear convergence.} If $\;\overline{\rho} \le \gamma$, then the algorithm converges to the optimal value function $V^*$ at same rate as the exact MPI~\eqref{eq:mpi} (i.e linear rate of convergence). More precisely, $$\|V_{N,\Omega}-V^*\|_\infty = \begin{cases}\mathcal O(\gamma^N),&\mbox{ if }\;\overline{\rho} < \gamma,\\\mathcal O(N\gamma^N),&\mbox { if }\;\overline{\rho}=\gamma.\end{cases}$$
\end{itemize}
\end{restatable}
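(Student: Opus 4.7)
The strategy is to recognize reg-MPI as a particular instance of AMPI, with error vectors that can be controlled uniformly by $\lambda_t$, and then apply Lemma~\ref{thm:scherrer} together with Theorem~\ref{thm:error-analysis} as black boxes. The only nontrivial step is to quantify the evaluation error incurred by iterating the regularized Bellman operator $m$ times.

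For the policy-improvement step, if $\pi_{t+1} \in \mathcal{G}_{\Omega_t}(V_{t,\Omega})$ then for every $\pi$ the defining inequality of $\mathcal{G}_{\Omega_t}$ combined with~\eqref{eq:reg-bellman} yields $\mathcal{T}^\pi V_{t,\Omega} \le \mathcal{T}^{\pi_{t+1}} V_{t,\Omega} + \Omega_t(\pi) - \Omega_t(\pi_{t+1})$, so the uniform bound~\eqref{eq:reg} immediately gives $\pi_{t+1} \in \mathcal{G}_{\epsilon'_{t+1}}(V_{t,\Omega})$ with $\|\epsilon'_{t+1}\|_\infty = O(\lambda_t)$. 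For the evaluation step, I would establish by induction on $m$ the identity
\[
(\mathcal{T}^\pi_{\Omega_t})^m V \;=\; (\mathcal{T}^\pi)^m V \;-\; \sum_{k=0}^{m-1}\gamma^k (P^\pi)^k \,\Omega_t(\pi),
\]
which follows from the observation that $\mathcal{T}^\pi_{\Omega_t}$ is the ordinary Bellman operator with reward shifted by $-\Omega_t(\pi)$, so affine expansion across $m$ steps just accumulates a geometric series of propagated shifts. This exhibits the evaluation residual as $\epsilon_{t+1} = -\sum_{k=0}^{m-1}\gamma^k (P^{\pi_{t+1}})^k \Omega_t(\pi_{t+1})$, and since each $P^{\pi}$ is non-expansive in $\ell_\infty$ one reads off $\|\epsilon_{t+1}\|_\infty \le \frac{1-\gamma^m}{1-\gamma}\lambda_t$.

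Plugging these two bounds into Lemma~\ref{thm:scherrer}, with the reg-MPI value iterates $V_{t,\Omega}$ playing the role of the AMPI iterates, produces inequality~\eqref{eq:reg-mpi-upper-bound}: the constant $1 + \frac{1-\gamma^m}{1-\gamma}$ in the definition of $\Lambda_N$ absorbs the improvement and evaluation contributions. Convergence to $V^*$ whenever $\lambda_t \to 0$ is then immediate, because the partial sums $\sum_{t=1}^{N-1}\gamma^{N-t}\lambda_t$ are a convolution of a summable geometric kernel with a vanishing sequence and therefore tend to zero.

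Finally, for the rate bounds in (A) and (B) I would apply Theorem~\ref{thm:error-analysis} with the choice $r_N := \lambda_N$. The hypothesis $\|\epsilon_N\|_\infty + \|\epsilon'_N\|_\infty \le C r_N$ holds by the reduction above, and the limits $\underline{\rho}$, $\overline{\rho}$ defined here coincide with the corresponding ones in Theorem~\ref{thm:error-analysis}, so the three regimes $\overline{\rho}<\gamma$, $\overline{\rho}=\gamma$, and $\underline{\rho}>\gamma$ transfer verbatim. The main technical obstacle is the inductive expansion of $(\mathcal{T}^\pi_{\Omega_t})^m V$ and the careful verification that the resulting error sequence fits the exact additive form required by AMPI; once the reward-shift perspective on $\Omega_t(\pi)$ is identified both steps reduce to routine bookkeeping.
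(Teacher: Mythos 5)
Your proposal is correct and follows essentially the same route as the paper's proof: reduce reg-MPI to AMPI by bounding the improvement-step error by $\sup_\pi\|\Omega_t(\pi)\|_\infty\le\lambda_t$ and the evaluation-step error via the same inductive identity $(\mathcal T^\pi_{\Omega_t})^mV=(\mathcal T^\pi)^mV-\sum_{j=0}^{m-1}\gamma^j(P^\pi)^j\Omega_t(\pi)$, then invoke Lemma~\ref{thm:scherrer} and Theorem~\ref{thm:error-analysis} with $r_N:=\lambda_N$. The only cosmetic difference is that you verify $\pi_{t+1}\in\mathcal G_{\epsilon'}(V_t)$ directly from the variational definition rather than bounding the difference of maxima, which is an equivalent (arguably slightly more careful) way of obtaining the same $O(\lambda_t)$ bound.
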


\begin{remark*}
It should be noted that the reg-MPI~\eqref{eq:reg-mpi} algorithm above is an instance of the AMPI~\eqref{eq:approx-mpi}, with policy evaluation step error $\epsilon_t := V_{t,\Omega} - (\mathcal{T}^{\pi_{t}})^m V_t$, and policy improvement step error given by $\epsilon_t := \max_\pi \mathcal{T}^\pi V_t - \max_\pi \mathcal{T}_{\Omega_t}^\pi V_t$.
\end{remark*}

We note that prior works established the asymptotic convergence of the soft VI and the soft Q-learning using contraction argument~\cite{haarnoja2017reinforcement} and stochastic approximation tools~\cite{fox2015taming}. We use a different type of analysis that provides finite-time error bounds for a large class of regularized algorithms.

\paragraph{Special case: same base regularizer.} The condition \eqref{eq:reg} is satisfied by time-varying regularizers of the form $\Omega_t = \tau_t\Omega$, for some uniformly bounded $\Omega$. These include the negative entropy regularizer as a special case since $\sup_\pi \|\Omega_{\text{Ent}}(\pi)\|_\infty \leq \log |\mathcal{A}|$. Thus, by controlling the temperature parameter of entropy-regularized algorithm, we can control the error of the value function returned by the algorithm w.r.t. the optimal value function.

\paragraph{Relation to exploration.} Prior works suggest that the temperature parameter $\tau_t$ controls the amount of exploration performed by the policy~\cite{haarnoja2017reinforcement} and the error-tolerance of the algorithm~\cite{kozuno2019theoretical}. From~\eqref{eq:reg-mpi-upper-bound} it is apparent that this desired behaviour is achieved in exchange for slower convergence if the decrease rate is greater than the discount factor $\gamma$. In contrast, if the decrease rate is fast enough (smaller than the discount factor asymptotically), then the above-mentioned properties are acquired with no impact on the convergence rate that remains as fast as the exact MPI! To the best of our knowledge, this is the first result to relate non-asymptotic performance of the MPI to exploration.

One limitation of Theorem~\ref{thm:reg-mpi-convergence} is that it does not provide a specific weight schedule to a problem at hand. E.g., the amount of exploration needed depends on the MDP structure. Too fast temperature decay implies no regularization and leads to insufficient exploration. In contrast, too slow temperature decay results in too strong regularization and unnecessary slow convergence. This trade-off has also been shown empirically on a class of entropy regularized algorithms in~\cite{neu2017unified}.

\subsection{Conservative Value Iteration (CVI)}
\label{sec:cvi-convergence}
We study the convergence rate of the conservative VI~\cite{kozuno2019theoretical} (CVI) scheme. CVI generalizes a number of value-based algorithms such as dynamic policy programming~\cite{azar2012dynamic} and advantage learning~\cite{baird1999reinforcement,bellemare2016increasing}. The CVI algorithm is similar to the soft VI~\eqref{eq:soft-vi} in a sense that it includes the entropic regularization. Differently, it produces Q-value iterates with an increased gap, i.e. amplified difference of Q-values between the maximizing action and all other actions.

Similar to the optimal (regularized) Bellman operators defined in Section~\ref{sec:preliminaries} for value function, we define the optimal (regularized) Bellman operator for Q-values as follows
\begin{eqnarray*}
    \left[\mathcal{T}^* Q\right](s,a) := r(s,a) + \gamma P(\cdot|s,a) \max_{a \in \mathcal{A}} Q(s,a)\\
    \left[\mathcal{T}^*_{\Omega}Q\right](s, a) := r(s,a) + \gamma P(\cdot|s,a) \Omega^*(Q(s, \cdot)),
\end{eqnarray*}
where $\Omega^*(Q(s,\cdot))$ is given by the smoothed maximum~\eqref{eq:logsumexp}.

Using this notation, the CVI scheme as defined by~\cite[Eq.(13)]{kozuno2019theoretical} can be presented in terms of entropic regularizer~\eqref{eq:entropy-reg}, where we additionally vary the regularization weight $\Omega_t := \beta_t \Omega_{\text{Ent}}$
\begin{equation}
\label{eq:cvi}
    Q_{t+1}(s,a) \leftarrow [\mathcal{T}^*_{\Omega_t} Q_t](s, a) + \alpha \left(Q_t(s,a) - \Omega_t^*(Q_t(s,\cdot)) \right),
\end{equation}
where $(\beta_t)_t > 0$ is sequence of weights and $\alpha \in [0,1]$. The second term in~\eqref{eq:cvi} penalizes Q-values of all suboptimal actions to produce a larger difference with the optimal action by a factor of $\alpha$.
The resulting policy $\pi_t$ is given by the Boltzmann policy
\begin{equation*}
\pi_{t+1}(\cdot|s) \leftarrow \mathcal{G}_{\Omega_t}(Q_t(s,\cdot)).   
\end{equation*}

Our analysis of the CVI is based on the analysis of the approximate advantage learning (AL) algorithm
\begin{equation}
\label{eq:approx-al}
\begin{split}
    &Q_{t+1}(s,a) \leftarrow \left[\mathcal{T}^* Q_t\right](s,a) \\
    &+ \alpha \left(Q_t(s,a)-\max_{a \in \mathcal{A}} Q_t(s,a) \right) + \epsilon_t(s,a),
\end{split}
\end{equation}
where $\epsilon_t \in \mathbb{R}^{\mathcal{S}\times\mathcal{A}}$ is an error at iteration $t$. If $\alpha=0$, the approximate AL algorithm coincides with the approximate VI (Eq.~\eqref{eq:approx-mpi} for $m=1$). The following Lemma restates the upper bound on the error of approximate AL established in~\cite[Theorem 1,$\beta=\infty$]{kozuno2019theoretical}.
\begin{lemma}[AL error propagation~\cite{kozuno2019theoretical}]
\label{thm:kozuno}
Consider the approximate AL scheme~\eqref{eq:approx-al}, and let $\Delta Q_N:=Q^{\pi_N}-Q^*$ be the Q-value regret after $N$ iterations. Then, one has 
\begin{equation}
\label{eq:kozuno-cvi}
\begin{split}
    &\|\Delta Q_N\|_\infty \le 2\gamma V_{\max}\Gamma_N \\
     & + \frac{2\gamma}{1-\gamma} \sum_{t=1}^{N} \gamma^{N-t}  \left\|\frac{\sum_{k=0}^t \alpha^{t-k}\epsilon_{k}}{A_N} \right\|_\infty,
\end{split}
\end{equation}
where $A_N := \sum_{k=0}^{N} \alpha^k$ and $\Gamma_N:=\frac{1}{A_N}\sum_{t=0}^{N} \gamma^{N-t}\alpha^t$.
\end{lemma}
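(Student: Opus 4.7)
Since \eqref{eq:approx-al} is a perturbation of $Q$-value iteration by the gap-increasing term $\alpha(Q_t - V_t)$ with $V_t(s) := \max_a Q_t(s,a)$, my plan is to (i) replace the $Q$-iterates by a suitable $\alpha$-weighted average that satisfies a genuine approximate Bellman recursion, (ii) apply a standard AVI propagation bound, and (iii) convert the resulting value-function error into a policy-performance bound.

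\textbf{Averaged iterate and AVI propagation.} I would introduce the weighted sum $\Psi_t := \sum_{k=0}^{t}\alpha^{t-k}Q_k$, whose natural normalization is $A_t$. The choice of weights is dictated by the $\alpha$-memory of the update: after multiplying the recursion $Q_{k+1} = \mathcal{T}^* Q_k + \alpha(Q_k - V_k) + \epsilon_k$ by $\alpha^{t-k}$ and summing, the $\alpha Q_k$ part telescopes, and what remains reorganizes into $\mathcal{T}^*$ applied (up to a $\max$-induced residual) to the previous averaged iterate, plus the effective error $\sum_{k=0}^{t}\alpha^{t-k}\epsilon_k$. Applying Lemma~\ref{thm:scherrer} at $m=1$ on $Q$-values to this recursion yields a bound of the form
$$\|\Psi_N/A_N - Q^*\|_\infty \;\lesssim\; V_{\max}\,\Gamma_N + \frac{1}{1-\gamma}\sum_{t=1}^{N}\gamma^{N-t}\Bigl\|\tfrac{1}{A_N}\sum_{k=0}^{t}\alpha^{t-k}\epsilon_k\Bigr\|_\infty.$$
The initialization term collapses to $V_{\max}\,\Gamma_N$ because the weight carried by $Q_0$ inside $\Psi_N/A_N$ is $\alpha^N/A_N$, and after the AVI $\gamma$-contraction this produces precisely $\Gamma_N = A_N^{-1}\sum_{t=0}^{N}\gamma^{N-t}\alpha^t$.

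\textbf{From value error to regret, and the main obstacle.} The final step is the classical conversion $\|Q^{\pi_N} - Q^*\|_\infty \le \frac{2\gamma}{1-\gamma}\|\Psi_N/A_N - Q^*\|_\infty$, valid when $\pi_N$ is greedy with respect to the averaged iterate, which supplies the overall $2\gamma$ and $2\gamma/(1-\gamma)$ factors in \eqref{eq:kozuno-cvi}. The delicate step is the first one: the correction $\alpha(Q_t - V_t)$ is nonlinear through the $\max$, so $\Psi_t/A_t$ does \emph{not} satisfy an exact Bellman recursion. The main obstacle is therefore the bookkeeping that shows the $\max$-induced residual accumulates only into the $V_{\max}\,\Gamma_N$ bias and does not leak into the error term beyond the $\alpha$-weighted form $\sum_{k=0}^{t}\alpha^{t-k}\epsilon_k$ stated in the lemma; this is what distinguishes the AL analysis from vanilla AVI and is where one must use the telescoping of the gap-increasing correction across consecutive iterates.
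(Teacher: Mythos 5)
The first thing to say is that the paper does not prove this statement at all: Lemma~\ref{thm:kozuno} is explicitly a restatement of \cite[Theorem 1, $\beta=\infty$]{kozuno2019theoretical}, and the paper's ``proof'' is the citation. Your sketch must therefore stand on its own as a reconstruction of that external result, and it does not: you yourself flag the crux --- showing that ``the $\max$-induced residual accumulates only into the $V_{\max}\,\Gamma_N$ bias'' --- as ``the main obstacle'' and then leave it unresolved. This is not bookkeeping; it is the entire content of the gap-increasing analysis. Concretely, unrolling $Q_{t+1}=\mathcal T^*Q_t+\alpha(Q_t-V_t)+\epsilon_t$ with weights $\alpha^{t-k}$ gives $Q_{t+1}=A_t r+\gamma P(A_t\bar V_t)-\alpha A_t\bar V_t+\sum_{k=0}^t\alpha^{t-k}\epsilon_k+\alpha^{t+1}Q_0$ with $\bar V_t:=A_t^{-1}\sum_{k=0}^t\alpha^{t-k}\max_aQ_k(\cdot,a)$. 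Two problems follow. First, $\bar V_t$ is the $\alpha$-weighted average of the pointwise maxima, which dominates (and is in general strictly larger than) $\max_a$ of the averaged iterate $\Psi_t/A_t$, so $\Psi_t/A_t$ does \emph{not} satisfy an approximate optimal-Bellman recursion with a one-sided, controllable residual; relating the greedy policy of $Q_{t+1}$ to the near-optimality of $\bar V_t$ is precisely where \cite{bellemare2016increasing,kozuno2019theoretical} have to work. Second, the recursion carries a leftover $-\alpha\bar V_t$ term that is not absorbed by any Bellman operator, so Lemma~\ref{thm:scherrer} --- which is stated for the $V$-iterates of AMPI with separate evaluation and greedy-step errors --- cannot simply be ``applied at $m=1$ on $Q$-values''; you would need to re-derive an error-propagation inequality adapted to this modified recursion.

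There is also a quantitative symptom that your final step cannot produce \eqref{eq:kozuno-cvi} as stated. If the last move were the uniform conversion $\|Q^{\pi_N}-Q^*\|_\infty\le\frac{2\gamma}{1-\gamma}\|\Psi_N/A_N-Q^*\|_\infty$ applied to a single value-error bound (as in Lemma~\ref{thm:scherrer}, where initialization and error terms share the prefactor $\frac{2}{1-\gamma}$), then both terms of \eqref{eq:kozuno-cvi} would carry $\frac{2\gamma}{1-\gamma}$. In the lemma the bias term is $2\gamma V_{\max}\Gamma_N$ while only the accumulated-error term carries the extra $\frac{1}{1-\gamma}$; this asymmetry indicates that the initialization and the errors are propagated through different channels in the actual argument, and that the regret conversion is interleaved with the error decomposition rather than applied once at the end. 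To make your proposal into a proof you would need to (i) state and prove the exact recursion satisfied by $\Psi_t/A_t$ including the sign and magnitude of the average-of-max versus max-of-average discrepancy, (ii) prove a dedicated propagation lemma for that recursion, and (iii) track how the greedy-policy regret bound interacts with each term --- i.e., essentially reproduce the proof of \cite[Theorem 1]{kozuno2019theoretical}.
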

If $\alpha=0$, the AL upper bound~\eqref{eq:kozuno-cvi} is $\gamma$ times the upper bound of the approximate MPI~\eqref{eq:approx}, as expected since the bound~\eqref{eq:approx} is given in terms of value function (see Lemma~\ref{thm:q-v-bound}). If $\alpha > 0$, we obtain a different cumulative error term that can be seen as a convolution of powers of $\alpha$ with errors. The error-free term in~\eqref{eq:kozuno-cvi} is of order $\mathcal{O}(N\max(\alpha,\gamma)^N)$ if $\alpha < 1$, and $\mathcal{O}(1/N)$ if $\alpha=1$, compared to $\mathcal{O}(\gamma^N)$ in the approximate MPI~\eqref{eq:approx}.

Next, similar to Section~\ref{sec:convergence-reg-mpi}, we proceed by showing the reduction of the CVI~\eqref{eq:cvi} to the approximate AL~\eqref{eq:approx-al}. Then, we establish the convergence of the CVI through the appropriate error control in the approximate AL.
\begin{restatable}[CVI convergence]{theorem}{cvi}
\label{thm:cvi-convergence}
Consider the CVI algorithm~\eqref{eq:cvi} with time-varying regularization functions $\Omega_t$, and let the sequence $(\lambda_t)_t$ which uniformly bounds $\Omega_t$, that is
\begin{equation*}
    \sup_{\pi}\|\Omega_t(\pi)\|_\infty := \sup_{\pi,\;s}|\Omega_t(\pi(\cdot|s))| \le \lambda_t
\end{equation*}
Let $\Delta Q_N:=Q^{\pi_N}-Q^*$ be the Q-value regret after $N$ iterations. Then it holds that
\begin{equation}
\label{eq:cvi-upper-bound}
    \|\Delta Q_N\|_\infty \le 2\gamma V_{\max}\Gamma_N + \frac{2\gamma}{1-\gamma} \Lambda_N
\end{equation}
where
$\Lambda_N := \frac{1}{A_N} \sum_{t=1}^{N-1} \gamma^{N-t}\left(\sum_{k=0}^t \alpha^{t-k} \lambda_t\right)$, 
$\Gamma_N :=\frac{1}{A_N}\sum_{t=0}^{N} \gamma^{N-t}\alpha^t$, and $A_N:=\sum_{k=0}^{N} \alpha^k$.
Moreover, if $\lambda_t \longrightarrow 0$, then the algorithm converges to the optimal solution $Q^*$.

Furthermore, define the quantities $\bar{\lambda}_N:=\frac{1}{N}\sum_{t=0}^N\lambda_t$,
$$
(\underline{\rho},\overline{\rho}) := \begin{cases}
({\underline{\lim}\;}\lambda_N/\lambda_{N-1},{\overline{\lim}\;}\lambda_N/\lambda_{N-1}),&\mbox{ if }\alpha \ne 1,\\
({\underline{\lim}\;}\bar{\lambda}_N/\bar{\lambda}_{N-1},{\overline{\lim}\;}\bar{\lambda}_N/\bar{\lambda}_{N-1}),&\mbox{ if }\alpha = 1.
\end{cases}
$$
Denote $\alpha \lor \gamma := \max(\alpha,\gamma)$.
Then we have the bounds
\begin{itemize}
    \item[(A)] \textbf{Slow convergence.} If $\alpha=1$ or $\underline{\rho} > \alpha \lor \gamma$,
then
    \begin{eqnarray*}
    \begin{split}
     \|\Delta Q_N\|_\infty &= 
    \begin{cases}
    \mathcal O(\lambda_N),&\mbox{if }\alpha \ne 1,\;\underline{\rho} > \alpha \lor \gamma,\\
    \mathcal O(\bar{\lambda}_N \lor \frac{1}{N}), &\mbox{if }\alpha=1,\;\underline{\rho} > \gamma,\\
    \mathcal O(\frac{1}{N}), &\mbox{if }\alpha=1,\;\overline{\rho} \le \gamma.
    \end{cases}
    \end{split}
    \end{eqnarray*}
    
 \item[(B)] \textbf{(Almost) linear convergence.} If $0 \le \alpha < 1$ and $\overline{\rho} \le \gamma$, then
\begin{eqnarray*}
    \|\Delta Q_N\|_\infty = \mathcal O(N(\alpha \lor \gamma)^N).
\end{eqnarray*}
\end{itemize}
\end{restatable}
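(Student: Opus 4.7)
The plan is to follow the template of Theorem~\ref{thm:reg-mpi-convergence}: first rewrite the CVI update~\eqref{eq:cvi} as an instance of the approximate advantage-learning recursion~\eqref{eq:approx-al}, then invoke Lemma~\ref{thm:kozuno} to obtain the stated bound~\eqref{eq:cvi-upper-bound}, and finally extract the asymptotic rates (A) and (B) by analysing the weighted sums $\Gamma_N$ and $\Lambda_N$ separately.

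For the reduction, I would subtract the template~\eqref{eq:approx-al} from~\eqref{eq:cvi} and read off the implicit error
\begin{equation*}
\epsilon_t(s,a) = \bigl[\mathcal T^*_{\Omega_t}Q_t - \mathcal T^*Q_t\bigr](s,a) + \alpha\bigl(\max_{a'}Q_t(s,a') - \Omega_t^*(Q_t(s,\cdot))\bigr).
\end{equation*}
Fenchel duality~\eqref{eq:fenchel-dual} and the assumed uniform bound $\sup_\pi\|\Omega_t(\pi)\|_\infty\le\lambda_t$ give $|\Omega_t^*(q)-\max_a q(a)|\le\lambda_t$; since $\mathcal T^*_{\Omega_t}$ and $\mathcal T^*$ differ by a $\gamma$-discounted expectation of exactly this quantity, we obtain $\|\epsilon_t\|_\infty\le(\gamma+\alpha)\lambda_t$. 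Substituting into Lemma~\ref{thm:kozuno}, bounding the inner sum pointwise by $\sum_{k=0}^t\alpha^{t-k}\lambda_k$, and absorbing the $(\gamma+\alpha)$ constant into $\mathcal O$-notation yields~\eqref{eq:cvi-upper-bound}. Convergence to $Q^*$ whenever $\lambda_t\to 0$ then follows because $\Gamma_N\to 0$ by direct computation and $\Lambda_N\to 0$ by a Cesàro/Abel argument on the weighted convolution.

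The asymptotic rates are then a matter of estimating $\Gamma_N$ and $\Lambda_N$. The error-free term satisfies $\Gamma_N=\mathcal O((\alpha\lor\gamma)^N)$ when $\alpha\ne\gamma$, $\mathcal O(N\gamma^N)$ when $\alpha=\gamma<1$, and $\mathcal O(1/N)$ when $\alpha=1$ (since $A_N=N+1$). For $\Lambda_N$ with $\alpha<1$, $A_N$ is bounded below by $1$, so the inner $\sum_{k=0}^t\alpha^{t-k}\lambda_k$ can be controlled by a ratio-test argument identical to the one used in Theorem~\ref{thm:error-analysis}(A,B): when $\underline\rho>\alpha\lor\gamma$ the outer geometric weights $\gamma^{N-t}$ combined with the inner geometric weights $\alpha^{t-k}$ are all dominated by $\lambda_N$, producing the $\mathcal O(\lambda_N)$ bound in case (A); when $\overline\rho\le\gamma$ the $\lambda$ factor is summable against $(\alpha\lor\gamma)^{-t}$, and case (B) reduces to the error-free rate $\mathcal O(N(\alpha\lor\gamma)^N)$. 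The boundary $\alpha=1$ is special: the inner sum collapses to $(t+1)\bar\lambda_t$, $A_N=N+1$, and the natural ratio to track becomes $\bar\lambda_N/\bar\lambda_{N-1}$, which explains the piecewise definition of $(\underline\rho,\overline\rho)$; in either regime the $\mathcal O(1/N)$ floor coming from $\Gamma_N$ appears.

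I expect the main obstacle to be the careful bookkeeping in case (A) with $\alpha=1$: here we must combine the Cesàro-averaged ratio control of $\bar\lambda_N$ with the $1/N$ decay inherited from $\Gamma_N$, and show that the two pieces produce $\mathcal O(\bar\lambda_N \lor 1/N)$ when $\underline\rho>\gamma$ and $\mathcal O(1/N)$ when $\overline\rho\le\gamma$. This requires verifying that the nested sum $\frac{1}{N}\sum_{t=1}^{N-1}\gamma^{N-t}(t+1)\bar\lambda_t$ inherits the decay of $\bar\lambda_N$ in the slow regime and of $1/N$ in the fast regime, via an Abel summation similar to the one used for Theorem~\ref{thm:reg-mpi-convergence} but with an extra $(t+1)/(N+1)$ factor. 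Cases with $\alpha\ne 1$ and cases (B) are routine once Lemma~\ref{thm:kozuno} is in hand.
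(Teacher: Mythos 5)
Your proposal follows the paper's own proof essentially step for step: reduce CVI to the approximate AL recursion, bound the induced error via the uniform bound $\sup_\pi\|\Omega_t(\pi)\|_\infty\le\lambda_t$, invoke Lemma~\ref{thm:kozuno}, and then estimate $\Gamma_N$ and the nested convolution $\Lambda_N$ with the same ratio-test lemma underlying Theorems~\ref{thm:error-analysis} and~\ref{thm:reg-mpi-convergence}, splitting on $0\le\alpha<1$ versus $\alpha=1$ (where the inner sum collapses to $(t+1)\bar\lambda_t$ and $A_N=N+1$). The one divergence is that you track the full implicit error $[\mathcal T^*_{\Omega_t}Q_t-\mathcal T^*Q_t]+\alpha(\max_a Q_t-\Omega_t^*(Q_t))$ and obtain $\|\epsilon_t\|_\infty\le(\gamma+\alpha)\lambda_t$, whereas the paper keeps only the first piece and bounds it by $\lambda_t$; your accounting is the more faithful reduction, and the extra factor only perturbs the constant in \eqref{eq:cvi-upper-bound}, not any of the stated rates.
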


\begin{remark*}
It should be noted that the CVI~\eqref{eq:cvi} algorithm above is an instance of the approximate AL~\eqref{eq:approx-al} with error given by $\epsilon_t := \max_\pi \mathcal{T}^\pi Q_t - \max_\pi \mathcal{T}_{\Omega_t}^\pi Q_t$.
\end{remark*}

Theorem~\ref{thm:cvi-convergence} shows that by controlling the sum of weighted regularization terms, we control the distance to optimality. 
As expected, if $\alpha=0$, we get the same rates as for the reg-MPI (Theorem~\ref{thm:reg-mpi-convergence}). If $\alpha \in [0, 1)$ and the temperature decays quickly, the CVI convergence at almost linear rate (bounds (B)). Differently from the reg-MPI convergence, the fast geometric decay of temperature can be slowed down by $\alpha=1$, where the convergence becomes inverse linear in $N$ (bounds (A)). The value of $\alpha=1$ can be beneficial with constant temperature since the CVI would still converge inverse linearly in $N$, whereas the reg-MPI bound~\eqref{eq:reg-mpi-upper-bound} turns to a constant.

\section{Experimental results}
\label{sec:experiments}

We provide an empirical evidence on our theoretical results presented in Section~\ref{sec:contrib}. We experiment with the soft VI, that is an instance of the reg-MPI scheme analysed in Section~\ref{sec:convergence-reg-mpi}, and the conservative VI analysed in Section~\ref{sec:cvi-convergence}. In experiments we use a cliff walking domain described below. 

\subsection{Cliff walking domain}
\label{sec:cliff-walking}
We use a cliff walking domain~\cite{sutton1998introduction} based on a 6x4 grid, also utilised for the analysis of the soft Q-learning algorithm~\cite{fox2015taming}. At each step the agent can move one cell in 4 directions (up, down, left and right). The target cell is located in the lower right corner and it is a terminal state with zero reward. At all other cells, agent receives a reward of~-1 except the bottom row that represents a cliff where the agent is given a reward of~-100. Thus, the goal of the agent is to reach the target cell as quickly as possible and avoid the cliff.
For all experiments, we set the discount factor $\gamma=0.9$.

\subsection{Soft Value Iteration}
\label{sec:exp-svi}
Soft VI~\eqref{eq:soft-vi} is an instance of the reg-MPI scheme with negative entropy regularizer and $m=1$.
We experimentally analyse the convergence of the soft VI~\eqref{eq:soft-vi} with varying temperature schedules. First, we demonstrate the convergence rate and compare it with the established bounds (Theorem~\ref{thm:reg-mpi-convergence}). Next, we analyse properties of the policy iterates. Finally, we experiment with different levels of stochasticity of the environment. We emphasize that we carry on our study in the simplest exact setting. As we shall see, the behaviour of the regularized algorithms is already interesting even in the absence of function approximation~\cite{kozuno2019theoretical} or asynchronous updates~\cite{fox2015taming}.

\begin{figure}[htbp]
    \centering
    \includegraphics[width=\linewidth]{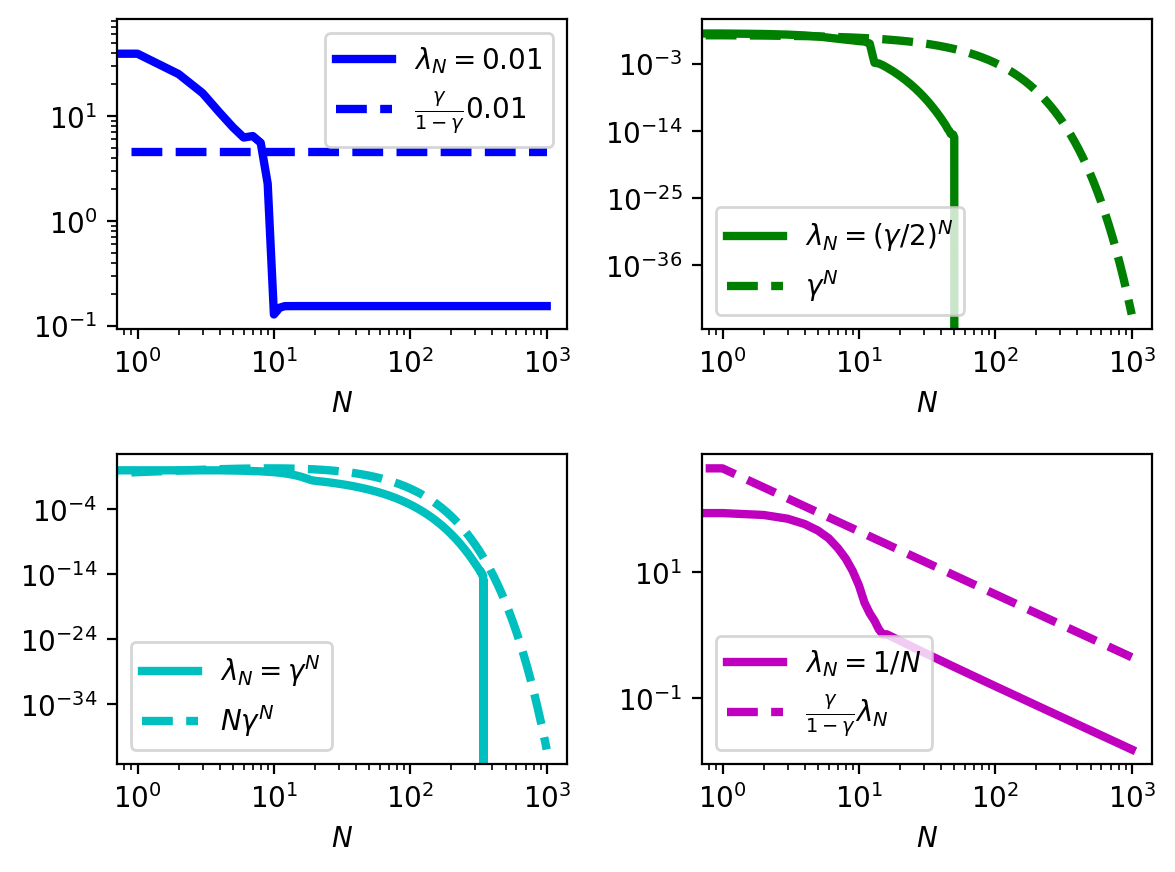}
    \caption{Comparison of upper bounds on the error of the reg-MPI for different convergence regimes (Theorem~\ref{thm:reg-mpi-convergence}) and the empirical error of the soft VI over iterations $N$ on a cliff walking domain. We see that the proposed bounds are really tight.}
    \label{fig:value-bound}
\end{figure}

\begin{figure}[htbp]
    \centering
    \includegraphics[width=\linewidth]{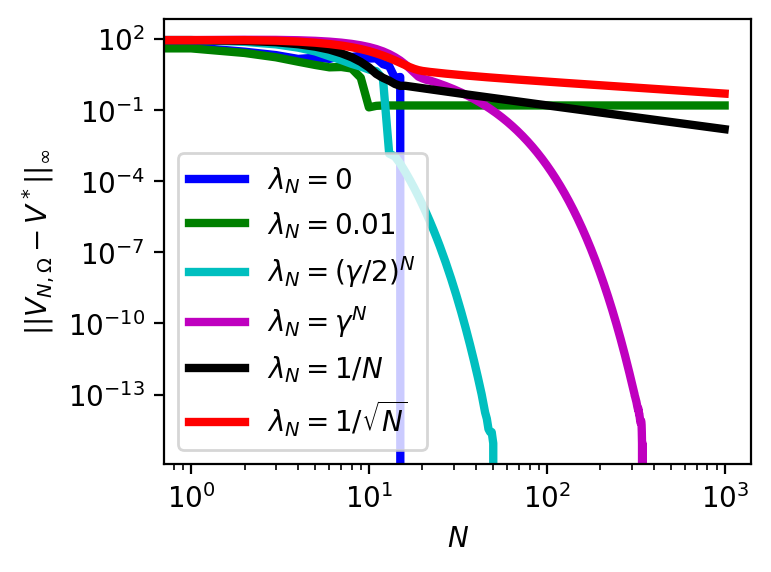}
    \caption{Empirical convergence of the exact VI ($\lambda_N=0$) and the soft VI to optimality with varying temperature schedules on a cliff walking domain follows convergence regimes given by Theorem~\ref{thm:reg-mpi-convergence}.}
    \label{fig:value-temp}
\end{figure}

\paragraph{Convergence.}
We experiment with the temperature schedules corresponding to different regimes of convergence given by the Theorem~\ref{thm:reg-mpi-convergence}. We plug the temperature schedules $\lambda_N$ as a function of the number of iterations $N$ of the soft VI. We also compare to a frequently used fixed $\lambda_N = \lambda = 0.01$ and to the exact VI that corresponds to $\lambda_N = \lambda = 0$. 

First, we plot the maximum empirical error between the current value iterate of the soft VI and the optimal value function with varying temperature schedules, see Figure~\ref{fig:value-temp}. As expected, the exact VI has the fastest convergence. The soft VI with fixed temperature value does not converge to the optimal value function resulting in irreducible error. The convergence of the soft VI is evident for the fastest linear rates of the temperature decay. The inverse linear schedule in $N$ is slower, however it surpasses the fixed value in terms of distance to optimality at around $10^2$ iterations.

Next, we compare the theoretical bounds of the soft VI (Theorem~\ref{thm:reg-mpi-convergence}) with the empirical progression of errors. As can be seen from Figure~\ref{fig:value-bound}, our bounds provide good description of the error in value function at finite time.

\paragraph{Safety of policy iterates.}
We demonstrated above that the soft VI algorithm with temperature decay converges to the optimal value function. 
In the following, we show that (1) intermediate policies of the soft VI with temperature annealing induce safe behaviour and (2) the optimal policy is reached at convergence. Figure~\ref{fig:value-pi-heatmap} compares the progression of the soft VI and the exact VI. As can be seen, the policy iterates of the soft VI avoid the edge of the cliff early in the learning, but the algorithm eventually converges to the optimal trajectory along the cliff. The early iterates of the exact VI directly act optimally following the edge of the cliff.
In a different asynchronous setting with noise, similar observation has been made in~\cite[6.2]{fox2015taming} that in the cliff walking domain the softmax policies with positive temperature result in safe trajectories far away from the cliff. Our analysis shows that this phenomenon is inherent to a fundamentally different behaviour of the soft VI algorithm.

\begin{figure}[htbp]
    \centering
    \includegraphics{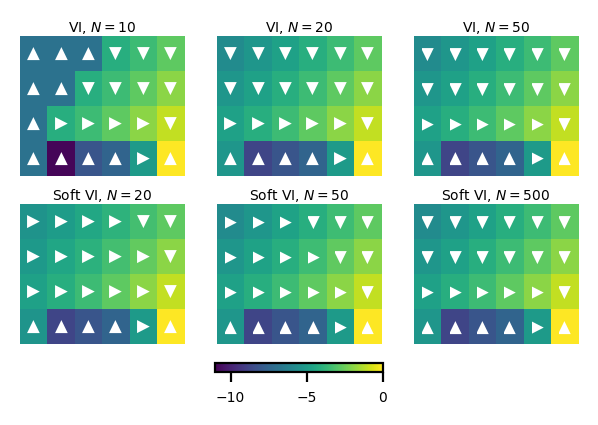}
    \caption{Evolution of value and policy iterates of the exact VI and the soft VI with $\lambda_N = \gamma^N$ on a cliff walking domain. Arrows and color scale indicate, respectively, the best valued action of the policy and the value of a state at iteration $N$.} 
    \label{fig:value-pi-heatmap}
\end{figure}

\paragraph{Robustness to stochasticity.}
We analyse the number of iterations necessary to achieve a certain level of accuracy at different temperature schedules. We study the cliff walking domain from above with added stochastic wind. It consists in replacing the target position by a horizontal or vertical slide of one cell with probability $p$. Table~\ref{tab:nb-iter} shows that on the stochastic cliff walking domain the number of iterations of the soft VI is less than the number of iterations required for the deterministic environment. Moreover, the number of iterations decreases with the amount of added stochasticity. The inverse relation is observed for the exact VI. Thus, we conclude that the soft VI in stochastic environments is beneficial in terms on convergence, resulting, for fast rates of temperature decay, in a comparable number of iterations.

\begin{table}[htbp]
    \centering
    \begin{tabular}{c|c|c|c}
         & \multicolumn{3}{c}{Number of iterations} \\
         & Determ. & \multicolumn{2}{c}{Stochastic} \\
         $\lambda_N$ & $p=0.0$ & $p=0.15$ & $p=0.3$\\
         \hline \hline
         $0$ & 18 & 24 & 31 \\
         \hline
         $(\gamma/2)^N$ & 29 & 27 & 32 \\
         $\gamma^N$ & 166 & 55 & 54 \\
         $1/N$ & 15,691 & 136 & 93 \\
         $1/\sqrt{N}$ & 247,394 & 6379 & 3440 \\
    \end{tabular}
    \caption{Number of iterations of the exact VI ($\lambda_N=0$) and the soft VI necessary to achieve accuracy $\epsilon=10^{-8}$ on deterministic and stochastic cliff walking domains.}
    \label{tab:nb-iter}
\end{table}

\subsection{Conservative Value Iteration}
\label{sec:exp-cvi}
We analyse the empirical convergence of the CVI algorithm with varying temperature decay rates $\rho$ and values of the gap-increasing factor $\alpha$, presented in Table~\ref{tab:cvi-nb-iter}. The color of the cells corresponds to different convergence regimes given by Theorem~\ref{thm:cvi-convergence}. As can be seen from Table~\ref{tab:cvi-nb-iter}, the temperature decay with $\rho \leq \gamma$ has similar convergence rate across values of $\alpha < \gamma$. This matches the almost linear rate $\mathcal{O}(N\gamma^N)$ predicted by Theorem~\ref{thm:cvi-convergence} (B). We observe that fast temperature decay $\rho \leq \gamma$ with large value of $\alpha=0.95$ slows down the convergence, as expected, to $\mathcal{O}(N\alpha^N)$. Slow inverse polynomial temperature schedule implies $\rho > \max(\alpha,\gamma)$ and hence, the convergence is of the same order of magnitude for all values of $\alpha$, predicted as $\mathcal{O}(1/N^2)$ by Theorem~\ref{thm:cvi-convergence} (A). We conclude that Theorem~\ref{thm:cvi-convergence} provides a good description of empirical convergence regimes of the CVI algorithm. We observe that, in line with our theoretical result, if $\alpha=1.$ the CVI does not converge in a reasonable amount of time. 

\begin{table}[htbp]
    \centering
    \begin{tabular}{c|c||c|c|c}
         & & \multicolumn{3}{c}{Number of iterations} \\
         $\lambda_N$ & $\rho$ & $\alpha=0.0$ & $\alpha=0.6$ & $\alpha=0.95$ \\
         \hline \hline
         $0.45^N$ & $0.45$ & \cellcolor[HTML]{ffe6e6}167 & \cellcolor[HTML]{ffe6e6}156 & \cellcolor[HTML]{e6f7ff}399 \\
         $0.8^N$ & 0.8 & \cellcolor[HTML]{ffe6e6}179 & \cellcolor[HTML]{ffe6e6}168 & \cellcolor[HTML]{e6f7ff}399 \\
         $\gamma^N$ & $\gamma$ & \cellcolor[HTML]{ffe6e6}208 & \cellcolor[HTML]{ffe6e6}197 & \cellcolor[HTML]{e6f7ff}399 \\
         $1/N^2$ & $1$ & \cellcolor[HTML]{ebfaeb}1367 & \cellcolor[HTML]{ebfaeb}1058 & \cellcolor[HTML]{ebfaeb}907 \\
    \end{tabular}
    \caption{Number of iterations of the conservative VI necessary to achieve accuracy of $\epsilon=10^{-8}$ with temperature schedule $\lambda_N$, its rate $\rho$ and the gap-increasing factor $\alpha$. Colors signify the convergence regimes predicted by the Theorem~\ref{thm:cvi-convergence}: (A) almost linear $\mathcal{O}(N\gamma^N)$ (red) and $\mathcal{O}(N\alpha^N)$ (blue), (B) slow $\mathcal{O}(1/N^2)$ (green).}
    \label{tab:cvi-nb-iter}
\end{table}




\section{Related works}
\label{sec:related-work}

We first discuss closely related work~\cite{singh2000convergence,pan2019reinforcement}. \cite{singh2000convergence} proves convergence to optimality of the SARSA algorithm with GLIE policies ("greedy in the limit with infinite exploration") that include a class of Boltzmann policies with decaying temperature. Another close work~\cite{pan2019reinforcement} studies convergence to optimality of a value iteration algorithm with a dynamic Boltzmann operator that represents an instance of the reg-MPI scheme with $m=1$, negative entropy regularizer and decreasing temperature schedule. Our work is different from the above-cited work since (1) we consider MPI-based algorithms, (2) our result on convergence rate holds over a class of approximate and regularized MPI algorithms, and (3) we link the desirable properties of the regularized MPI such as targeted exploration to its convergence rate through the schedule of regularization parameter.



A theoretical justification of empirical success of the regularized value-based RL algorithms has been proposed in~\cite{kozuno2019theoretical}. The authors showed that (1) value-based algorithms using softmax operator and fixed temperature parameter are tolerant to any type of errors, e.g. arising from function approximation or due to finite sample of observations used to perform the updates; (2) the temperature parameter controls the trade-off between the asymptotic performance and the sensitivity to errors. Compared to this analysis, our work is complementary in that we provide sufficient conditions of convergence of a family of reg-MPI and conservative VI algorithms. 

The optimization perspective on the regularized MDP framework proposed by~\cite{neu2017unified,geist2019theory} allows the learning rate interpretation of the regularization weight.
In~\cite[D.1]{geist2019theory} the regret of the weighted regularized MPI scheme is analysed when it is subject to approximations. Our work is different in that we consider the regularization itself as errors in the approximate MPI scheme.

The temperature schedules obtained in Section~\ref{sec:error-analysis} have similarities with the decrease factors of the Boltzmann exploration in the multi-arm bandit setting, e.g. $\mathcal{O}(1/N)$ and $\mathcal{O}(\log N/N)$ are frequently used~\cite[2.2]{vermorel2005multi}. Recently, it was shown that temperature schedules of the form $\mathcal{O}(1/\sqrt{N})$ induce near-optimal performance~\cite{cesa2017boltzmann}. Despite these similarities, exploration in the RL setup is not as well understood as in bandits setting; our work contributes by providing a link to the convergence rate.

\section{Conclusion}
Following the success of entropy-regularized methods in RL, we study the convergence to optimality of a class of dynamic programming algorithms unified under the regularized MPI and the conservative VI schemes. By the means of reduction to the approximate counterparts, we showed the general convergence of these schemes to the optimal solution of the original RL problem with decreasing schedule of the regularization parameter over iterations. Moreover, our analysis showed that the convergence of the regularized MPI is as fast as the exact MPI, if the decrease rate of the regularization weight is sufficiently fast; otherwise the algorithm's convergence slows down to the same rate as the decay rate of the regularization parameter. 

We experimentally demonstrate that the empirical convergence closely follows our theoretical results. We showcase a different behaviour of the regularized algorithms even in the absence of approximations, namely, robustness to stochasticity of the environment and safety of trajectories induced by the policy iterates.

\clearpage
\bibliographystyle{unsrt}

\clearpage

\appendix
\onecolumn
\label{sec:appendix}
\section{Proofs}
Refer to the manuscript for the various notations and terminology.

\label{sec:proofs}

The following Lemma will be used repeated in the rest of the proofs.
\begin{figure*}[!ht]
    \centering
    \includegraphics[width=\linewidth]{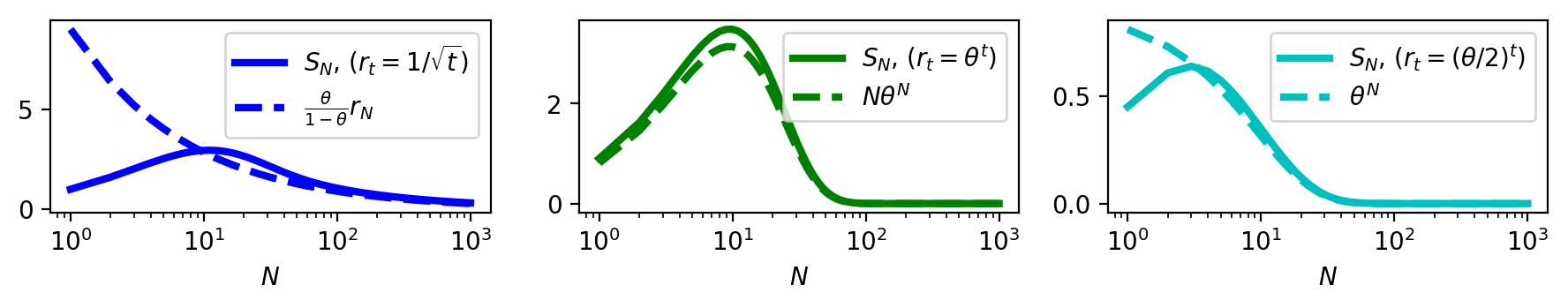}
    \caption{Illustration of the bounds established in Lemma \ref{thm:germ}, for different regimes of the per-iteration error bounds $r_t$. In these illustrations, we plugged $\theta=0.9$. We see that our proposed upper bounds are quite tight.}
    \label{fig:bounds}
\end{figure*}

\begin{lemma}
\label{thm:germ}
Let $r_1,r_2,\ldots,r_t,\ldots$ be a sequence of positive real numbers and $\theta \in [0, 1)$. Define $\underline{\rho}:=\lim\inf r_N/r_{N-1}$ and $\overline{\rho}:=\lim\sup r_N/r_{N-1}$, and consider the sums $S_N := \sum_{t=0}^{N-1}\theta^{N-t}r_t$, for $N \ge 1$. We have the following bounds
\begin{itemize}
    \item[(A)] If $\underline{\rho} \ge \theta$, then
    \begin{eqnarray}
    \label{eq:slow}
    S_N=\mathcal O(r_N).
    \end{eqnarray}
    \item[(B)]  If $\overline{\rho} \le \theta$, then
    \begin{eqnarray}
    \label{eq:fast}
    S_N =
    \begin{cases}
    \mathcal O(\theta^N),&\mbox{ if }\overline{\rho} < \theta,\\
    \mathcal O(N\theta^N),&\mbox{ if }\overline{\rho}=\theta.
    \end{cases}
    \end{eqnarray}
\end{itemize}
\end{lemma}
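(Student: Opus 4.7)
The plan is to control each sum $S_N=\sum_{t=0}^{N-1}\theta^{N-t}r_t$ by a ratio-comparison argument. After the reindexing $k := N - t$ we have $S_N = \sum_{k=1}^N \theta^k r_{N-k}$, and the hypotheses on $(r_t)$ let us sandwich $r_{N-k}$ between geometric sequences in $k$, reducing $S_N$ to a geometric sum. In both parts I would split the sum into a main piece that is controlled by a convergent geometric series and a finite ``head'' of fixed terms whose contribution is absorbed into the error term.

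For part (A), interpreted with the strict inequality $\underline{\rho}>\theta$ as used in the downstream theorems, I would pick any $\rho\in(\theta,\underline{\rho})$. The definition of $\underline{\lim}$ produces a threshold $T$ such that $r_N \ge \rho\, r_{N-1}$ for all $N\ge T$, which iterates to $r_{N-k} \le \rho^{-k} r_N$ for $k\le N-T$. Splitting $S_N$ at $k=N-T$ bounds the first block by $r_N\sum_{k\ge 1}(\theta/\rho)^k = \tfrac{\theta}{\rho-\theta}\,r_N$, while the remaining $T$ terms are a fixed constant times $\theta^{N-T}$; since $\rho>\theta$ forces $r_N\gtrsim\rho^N$, the remainder is $o(r_N)$ and we obtain $S_N=O(r_N)$. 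For part (B) with $\overline{\rho}<\theta$ the symmetric move picks $\rho\in(\overline{\rho},\theta)$, bounds $r_t\le C\rho^t$ globally, and yields $S_N\le C\theta^N\sum_{t=0}^{N-1}(\rho/\theta)^t$, which is $O(\theta^N)$ because $\rho/\theta<1$.

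The main obstacle is the boundary regime $\overline{\rho}=\theta$, where no $\rho<\theta$ is available and the coarse replacement $r_t\le C(\theta+\epsilon)^t$ only gives $O((\theta+\epsilon)^N)$ for every $\epsilon>0$, which is too weak. My plan here is the substitution $u_t := r_t/\theta^t$, so that $S_N/\theta^N=\sum_{t=0}^{N-1}u_t$ and the ratio hypothesis becomes $\limsup u_t/u_{t-1}\le 1$. Combined with the natural side-condition that $(u_t)$ is bounded, which holds in the applications to Theorems~\ref{thm:reg-mpi-convergence} and~\ref{thm:cvi-convergence} where $\lambda_t$ is commensurate with $\gamma^t$ (e.g.\ $\lambda_t=\gamma^t$), this forces $\sum_{t=0}^{N-1}u_t=O(N)$ and hence $S_N=O(N\theta^N)$. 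This boundary case is the delicate step: without the boundedness of $(u_t)$, sequences such as $r_t=t\theta^t$ (which satisfy $\overline{\rho}=\theta$ but grow as $N^2\theta^N$) would violate the stated bound, so the argument implicitly leans on the geometric-like structure of the per-iteration error sequences that arise in the AMPI, reg-MPI and CVI reductions.
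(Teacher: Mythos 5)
Your argument for part (A) (with the strict inequality $\underline{\rho}>\theta$, which is what the paper's own proof assumes and what the downstream theorems actually use) and for the subcase $\overline{\rho}<\theta$ of part (B) is essentially the paper's argument, executed more carefully: the paper telescopes the ratio bound directly with $\underline{\rho}$ (resp.\ $\overline{\rho}$) in place of an intermediate $\rho$ and silently absorbs the finitely many head terms where the ratio inequality need not hold, whereas you pick $\rho$ strictly between $\theta$ and the relevant limit and show explicitly that the head contributes $O(\theta^N)=o(r_N)$. Both reduce $S_N$ to a geometric series; your version is the rigorous one.

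The substantive point is the boundary case $\overline{\rho}=\theta$, and there you have correctly identified a genuine defect --- not in your proof, but in the lemma itself. The paper's proof of this case asserts that ``a similar argument yields $S_N\lesssim r_1\theta^{N-1}\sum_{t=0}^{N-2}1$'', which tacitly uses $r_t\lesssim\theta^{t-1}r_1$; but $\overline{\lim}\;r_t/r_{t-1}=\theta$ only gives $r_t\le(\theta+\epsilon)r_{t-1}$ eventually, not $r_t\le\theta r_{t-1}$. Your counterexample $r_t=t\theta^t$ (for which $\overline{\rho}=\theta$ yet $S_N=\theta^N\sum_{t<N}t=\Theta(N^2\theta^N)$) shows the stated bound $O(N\theta^N)$ is false in general, so no proof could close this case without an extra hypothesis. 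Your patch --- requiring $u_t:=r_t/\theta^t$ to be bounded, i.e.\ $r_t=O(\theta^t)$ --- is the natural minimal fix and covers the concrete schedules used in the paper (e.g.\ $\lambda_t=\gamma^t$); note, however, that the same caveat then propagates to the $\overline{\rho}=\gamma$ (and $\overline{\rho}=\alpha$) branches of Theorems~\ref{thm:error-analysis}, \ref{thm:reg-mpi-convergence} and \ref{thm:cvi-convergence}, which as stated only constrain the ratio limit. Alternatively one can keep the hypotheses as they are and weaken the boundary conclusion to $S_N=O(N(\theta+\epsilon)^N)$ for every $\epsilon>0$, which your coarse bound already delivers with no side condition.
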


\begin{proof}
\textit{(A)} Suppose $\underline{\rho} := {\underline{\lim}\;} r_N/r_{N-1} > \theta$. For sufficiently large $t \le N$, we have $r_N \ge \underline{\rho} r_{N-1} \ge \ldots \ge \underline{\rho}^{N-t}r_t$ and so $r_t \le r_N\underline{\rho}^{-(N-t)}$. So, for large $N$, one computes
\begin{eqnarray*}
\begin{split}
S_N := \sum_{t=1}^{N-1}\theta^{N-t}r_t
&\lesssim \sum_{t=1}^{N-1}\theta^{N-t}r_N\underline{\rho}^{-(N-t)}
=r_{N}\sum_{t=1}^{N-1}(\theta/\underline{\rho})^{N-t}
=r_N(\theta/\underline{\rho})\frac{1-(\theta/\underline{\rho})^N}{1-\theta/\underline{\rho}} \lesssim \frac{\theta}{\underline{\rho}-\theta}r_N=\mathcal O (r_N). 
\end{split}
\end{eqnarray*}
\textit{(B)} Suppose $\overline{\rho} := {\overline{\lim}\;} r_N/r_{N-1} < \theta$. Then for sufficiently large $t \le N$, it holds that $r_t \le \overline{\rho} r_{t-1} \le \ldots \le \overline{\rho}^{t-1} r_1$. Thus for sufficiently large $N$, one has
\begin{eqnarray*}
\begin{split}
S_N := \sum_{t=1}^{N-1}\theta^{N-t}r_t \lesssim \sum_{t=1}^{N-1}\theta^{N-t} r_1\overline{\rho}^{t-1}
=r_{1}\theta^{N-1}\sum_{t=0}^{N-2}\left(\frac{\overline{\rho}}{\theta}\right)^t
&=Cr_1\theta^{N-1}\frac{1-(\overline{\rho}/\theta)^{N-1}}{1-\overline{\rho}/\theta}\\
&\lesssim \frac{\theta}{\theta-\overline{\rho}}r_1\theta^{N-1}=\mathcal O (\theta^N). 
\end{split}
\end{eqnarray*}
Finally, for $\overline{\rho}=\theta$, a similar arguments yield
$S_N \lesssim r_1\theta^{N-1}\sum_{t=0}^{N-2}1 = \mathcal O(N\theta^N)$. 
\end{proof}

\subsection{Convergence rates for AMPI (Appproximate Modified Policy Iteration)}
\ampi*
\begin{proof} The proof is based on basic properties of convergent sequences and series.

\textbf{General convergence.} Since $r_t \longrightarrow 0$, it follows that for any $\delta > 0$, $r_t \lesssim \delta$ (where the symbol "$a_t \lesssim b_t$" means that $a_t \le b_t$ for sufficiently large $t$ ). Thus for sufficiently large $N$, one has
\begin{eqnarray*}
\begin{split}
E_N := \sum_{t=1}^{N-1}\gamma^{N-t} (\|\epsilon_t\|_\infty + \|\epsilon'_t\|_\infty) \le C\sum_{t=1}^{N-1}\gamma^{N-t}r_t
\lesssim C\delta\sum_{t=1}^{N-1}\gamma^{N-t} \le C\frac{\gamma}{1-\gamma}\delta.
\end{split}
\end{eqnarray*}
Thus $E_N \longrightarrow 0$ in the limit $N \rightarrow \infty$, and by virtue of the bound \eqref{eq:approx} of Lemma \ref{thm:scherrer} the algorithm converges to the optimal value function $V^*$ as claimed.

\textbf{Convergence with explicit rates.} We now establish the explicit rates of convergence claimed in the theorem under corresponding additional assumptions.

\textit{(A)} Suppose $\underline{\rho} := {\underline{\lim}\;} r_N/r_{N-1} > \gamma$. For sufficiently large $t \le N$, we have $r_N \ge \underline{\rho} r_{N-1} \ge \ldots \ge \underline{\rho}^{N-t}r_t$ and so $r_t \le r_N\underline{\rho}^{-(N-t)}$. So, for large $N$, one computes
\begin{eqnarray*}
\begin{split}
E_N := \sum_{t=1}^{N-1}\gamma^{N-t} (\|\epsilon_t\|_\infty + \|\epsilon'_t\|_\infty) \le C\sum_{t=1}^{N-1}\gamma^{N-t}r_t
&=\mathcal O (r_N),
\end{split}
\end{eqnarray*}
where the last equality an application of Lemma \ref{thm:germ} (more precisely, an application of the bound \eqref{eq:slow} with $\theta=\gamma < \underline{\rho}$).

\textit{(B)} Suppose $\overline{\rho} := {\overline{\lim}\;} r_N/r_{N-1} < \gamma$. Then for sufficiently large $t \le N$, it holds that $r_t \le \overline{\rho} r_{t-1} \le \ldots \le \overline{\rho}^{t-1} r_1$. Thus for sufficiently large $N$, one has
$E_N \le C\sum_{t=1}^{N-1}\gamma^{N-t}r_t = \mathcal O (\gamma^N)$,
by applying Lemma \ref{thm:germ} (more precisely, by applying the bound \eqref{eq:fast} with $\theta=\gamma > \overline{\rho}$).

Finally, for $\overline{\rho}=\gamma$, a similar arguments yield
$E_N = \mathcal O(N\gamma^N)$,
by applying Lemma \ref{thm:germ} (more precisely, by applying the bound \eqref{eq:fast} with $\theta=\gamma = \overline{\rho}$).
\end{proof}

\subsection{Convergence rates for reg-MPI (regularized Modified Policy Iteration)}
\regmpi*
\begin{proof}
We proceed by bounding the policy evaluation and policy improvement step errors of the reg-MPI~\eqref{eq:reg-mpi} with respect to the exact MPI~\eqref{eq:mpi}. We note that reg-MPI~\eqref{eq:reg-mpi} is an instance of AMPI~\eqref{eq:approx-mpi}, with policy evaluation step error $\epsilon_t := V_{t,\Omega} - (\mathcal{T}^{\pi_{t}})^m V_t$, and policy improvement step errors given by $\epsilon_t := \max_\pi \mathcal{T}^\pi V_t - \max_\pi \mathcal{T}_{\Omega_t}^\pi V_t$.

\paragraph{Step 1: bound evaluation-step error $\|\epsilon_t\|_\infty$.} To begin, it is easy to prove by induction on $m$ (see Appendix~\ref{sec:proofs}) that for every policy $\pi \in \Delta_{\mathcal A}^{\mathcal S}$ and value function $V \in \mathbb R^{\mathcal S}$ and one has the formula
\begin{equation}
\label{eq:mstep}
(\mathcal T^{\pi}_\Omega)^mV=(\mathcal T^{\pi})^mV-\sum_{j=0}^{m-1}\gamma^j(P^{\pi})^j\Omega(\pi),
\end{equation}
where $(P^\pi)^j$ is the $j$th power of the matrix $P^\pi$. Thus one has
\begin{eqnarray}
\label{eq:reg-mpi-eval-error}
\begin{split}
\|\epsilon_t\|_\infty &= \|V_{t,\Omega} - (\mathcal{T}^{\pi_{t}})^m V_t\|_\infty
= \|(\mathcal{T}^{\pi_{t}}_{\Omega_t})^m V_t - (\mathcal{T}^{\pi_{t}})^m V_t\|_\infty = \left\| \sum_{j=0}^{m-1} \gamma^j (P^{\pi_{t}})^j\Omega_t(\pi_{t})\right\|_\infty\\
&\leq \sum_{j=0}^{m-1} \gamma^j\|(P^{\pi_{t}})^j\Omega_t(\pi_{t})\|_\infty
\leq \sum_{j=0}^{m-1}\gamma^j\|\Omega_t(\pi_{t})\|_\infty
= \frac{1-\gamma^m}{1-\gamma} \|\Omega_t(\pi_{t})\|_\infty \le \frac{1-\gamma^m}{1-\gamma}\alpha_t,
\end{split}
\end{eqnarray}
where the last inequality follows from the Cauchy-Schwartz inequality \begin{eqnarray*}
\begin{split}
\|(P^{\pi})^j\Omega_t(\pi_{t})\|_\infty = \max_s |(P^{\pi})^j(\cdot|s)\Omega_t(\pi_{t})| \le \max_s \|(P^{\pi_{t}})^j(\cdot|s)\|_1\|\Omega_t(\pi_{t})\|_\infty = \|\Omega_t(\pi_{t})\|_\infty,
\end{split}
\end{eqnarray*}

since $\|(P^{\pi_{t}})^j(\cdot|s)\|_1 =1$ because $(P^{\pi_{t}})^j(\cdot|s)$ is a probability distribution (over next states).

\paragraph{Step 2: bound policy improvement step error $\|\epsilon'_t\|_\infty$.} 
Using elementary properties of the max operator and definition of the regularized operator $\mathcal{T}_\Omega^\pi$, one has
\begin{equation}
\begin{split}
\|\epsilon'_t\|_\infty = \|\max_\pi \mathcal{T}^\pi V_t - \max_\pi \mathcal{T}_{\Omega_t}^\pi V_t\|_\infty
\le \max_\pi\|\mathcal{T}^\pi V_t - \mathcal{T}_{\Omega_t}^\pi V_t\|_\infty
= \max_\pi\|\Omega_t(\pi)\|_\infty \le \alpha_t.
\label{eq:reg-mpi-max-error}
\end{split}
\end{equation}

By combining per-iteration error bounds~\eqref{eq:reg-mpi-eval-error} and~\eqref{eq:reg-mpi-max-error} and using Lemma~\ref{thm:scherrer}, one obtains~\eqref{eq:reg-mpi-upper-bound}. From this bound and Theorem~\ref{thm:error-analysis} invoked with $r_t := \alpha_t$ and $C=1+\frac{1-\gamma^m}{1-\gamma}$, we get that the algorithm reg-MPI~\eqref{eq:reg-mpi} converges to the optimal value function $V^*$, with the claimed rates of convergence.
\end{proof}

\paragraph{Proof of formula \eqref{eq:mstep}.}
Let $\pi$ be a policy and $V$ be a value function.
By ~\eqref{eq:reg-bellman}, one has
$$
\mathcal T_\Omega^\pi V =  \mathcal T^\pi V - \Omega(\pi)=\mathcal T^\pi V - \gamma^0(P^\pi)^0\Omega(\pi),
$$
and so the formula is valid for $m=1$ step. Now suppose the formula~\eqref{eq:mstep} is valid for $m$ steps. Then
\begin{equation*}
\begin{split}
    &(\mathcal T^\pi_\Omega)^{m+1}V = \mathcal T_\Omega^\pi((\mathcal T_\Omega^\pi)^m V) = \mathcal T^\pi((\mathcal T_\Omega^\pi)^m V)-\Omega(\pi)
    =r^\pi + \gamma P^\pi(\mathcal T_\Omega^\pi)^m V  - \Omega(\pi)\\
    &\quad=r^\pi + \gamma P^\pi\left((\mathcal T^\pi)^mV-\sum_{j=0}^{m-1}\gamma^j(P^\pi)^j\Omega(\pi)\right)-\Omega(\pi)
    =r^\pi + \gamma P^\pi(\mathcal T^\pi)^mV-\gamma P^\pi\sum_{j=0}^{m-1}\gamma^j(P^\pi)^j\Omega(\pi)-\Omega(\pi)\\
    &\quad=\mathcal T^\pi((\mathcal T^\pi)^{m}V)-\sum_{j=0}^{m}\gamma^j(P^\pi)^j\Omega(\pi)
    = (\mathcal T^\pi)^{m+1}V-\sum_{j=0}^{m}\gamma^j(P^\pi)^j\Omega(\pi),
    \end{split} 
\end{equation*}
which is the formula~\eqref{eq:mstep} for $m+1$ steps.
\qed
\subsection{Convergence rates for CVI (Conservative Value Iteration)}
\cvi* 
\begin{proof}
The first term $\Gamma_N$ in the upper-bound in Lemma~\ref{thm:kozuno} is itself upper-bounded as follows
\begin{equation*}
    \Gamma_N = \begin{cases} \mathcal{O}(N(\alpha \lor \gamma)^N),&\mbox{ if }0\leq\alpha < 1\\\mathcal{O}(\frac{1}{N}),&\mbox{ if }\alpha=1.  \end{cases}
\end{equation*}
So it remains to control the second term $\Lambda_N$ in the bound.

We first bound the errors $\epsilon_t := \max_\pi \mathcal{T}^\pi V_t - \max_\pi \mathcal{T}_{\Omega_t}^\pi V_t$ in the approximate AL~\eqref{eq:approx-al}. Similar to the Step 2 in the proof of Theorem~\ref{thm:reg-mpi-convergence}, one has
\begin{eqnarray*}
\|\epsilon_t\|_\infty = \|\max_\pi \mathcal{T}^\pi V_t - \max_\pi \mathcal{T}_{\Omega_t}^\pi V_t\|_\infty
\leq \max_\pi\|\Omega_t(\pi)\|_\infty \leq \lambda_t.
\end{eqnarray*}
So, by 
the triangular inequality, we have the bound
\begin{equation*}
    \sum_{t=1}^{N-1} \gamma^{N-t} \left\|\sum_{k=0}^t \alpha^{t-k}\epsilon_k \right\|_\infty \leq \sum_{t=1}^{N-1} \gamma^{N-t} \left(\sum_{k=0}^t \alpha^{t-k} \|\epsilon_k\|_\infty\right) = \sum_{t=1}^{N-1} \gamma^{N-t}\left(\sum_{k=0}^t \alpha^{t-k} \lambda_t\right) =:\Lambda_N.
\end{equation*}
The inner and outer sums are of the same type that we analysed in Lemma~\ref{thm:germ}.

\paragraph{Case 1: $0 \le \alpha < 1$.}
In this case $A_{N} = \Omega(1)$. We will consider different subcases. Viz,
\paragraph{Case 1.1: $\underline{\rho} > \alpha$.} Under this assumption, we have $\sum_{k=0}^t \alpha^{t-k} \lambda_k = \mathcal O(\lambda_t)$ by applying Lemma \ref{thm:germ} (more precisely, by applying the bound \eqref{eq:slow} with $\theta=\alpha < \underline{\rho}$). Thus,
$$
\Lambda_N := \frac{1}{A_N} \sum_{t=1}^{N-1} \gamma^{N-t}\left(\sum_{k=0}^t \alpha^{t-k} \lambda_t\right) \lesssim \frac{1}{A_N}\sum_{t=0}^{N-1}\gamma^{N-t}\lambda_t = \begin{cases}\mathcal O(\lambda_N),&\mbox{ if }\underline{\rho}>\gamma,\\
\mathcal (N\gamma^N),&\mbox{ if }\overline{\rho} \le \gamma,\end{cases}
$$
where the last equality is via another application of Lemma \ref{thm:germ}.
Thus,
$$
\|\Delta Q_N\|_\infty = \begin{cases}
\mathcal O(\lambda_N \lor (N(\alpha \lor \gamma)^N)),&\mbox{ if }\underline{\rho} > \gamma,\\
\mathcal O(N(\alpha \lor \gamma)^N),&\mbox{ if }\overline{\rho} \le \gamma.
\end{cases}
$$

\paragraph{Case 1.2: $\overline{\rho} < \alpha$.} Under this assumption, we have
$\sum_{k=0}^t \alpha^{t-k} \lambda_k = \mathcal O(\alpha^t)$
by applying Lemma \ref{thm:germ} (more precisely, by applying the bound \eqref{eq:fast} with $\theta=\alpha > \overline{\rho}$). Thus,
\begin{eqnarray*}
\begin{split}
\Lambda_N := \frac{1}{A_N} \sum_{t=1}^{N-1} \gamma^{N-t}\left(\sum_{k=0}^t \alpha^{t-k} \lambda_t\right) \lesssim \sum_{t=0}^{N-1}\gamma^{N-t}\alpha^t& = \begin{cases}\mathcal O(\alpha^N),&\mbox{ if }\alpha >\gamma,\\
\mathcal O(\gamma^N),&\mbox{ if }\alpha < \gamma,\\
\mathcal (N\gamma^N),&\mbox{ if }\alpha=\gamma.\end{cases}\\
&= \mathcal O(N(\alpha \lor \gamma)^N),
\end{split}
\end{eqnarray*}
where the last equality is via another application of Lemma \ref{thm:germ}.
Thus
$\|\Delta Q_N\|_\infty = \mathcal O(N(\alpha \lor \gamma)^N))$.

\paragraph{Case 1.3: $\overline{\rho} = \alpha$.} Under this assumption, we have
$\sum_{k=0}^t \alpha^{t-k} \lambda_k =
\mathcal O(t\alpha^t)$
by applying Lemma \ref{thm:germ} (more precisely, by applying the bound \eqref{eq:fast} with $\theta=\alpha = \overline{\rho}$). Thus,
\begin{eqnarray*}
\begin{split}
\Lambda_N := \frac{1}{A_N} \sum_{t=1}^{N-1} \gamma^{N-t}\left(\sum_{k=0}^t \alpha^{t-k} \lambda_t\right) \lesssim \sum_{t=0}^{N-1}\gamma^{N-t}t\alpha^t &= \begin{cases}\mathcal O(N\alpha^N),&\mbox{ if }\alpha >\gamma,\\ \mathcal O(\gamma^N),&\mbox{ if }\alpha < \gamma,\\
\mathcal (N\gamma^N),&\mbox{ if }\alpha=\gamma\end{cases}\\
&=\mathcal O(N(\alpha \lor \gamma)^N)
\end{split}
\end{eqnarray*}
where the last equality is via another application of Lemma \ref{thm:germ}. Thus
$\|\Delta Q_N\|_\infty = \mathcal O(N(\alpha \lor \gamma)^N))$.

\paragraph{Case 2: $\alpha=1$.} Under this assumption, we have
$\sum_{k=0}^t \alpha^{t-k} \lambda_k = \sum_{k=0}^t \lambda_k =: t\bar{\lambda}_t$ and $A_N = N$. Thus
$$
\Lambda_N := \frac{1}{A_N} \sum_{t=1}^{N-1} \gamma^{N-t}\left(\sum_{k=0}^t \alpha^{t-k} \lambda_t\right) = \frac{1}{N}\sum_{t=0}^{N-1}\gamma^{N-t}t\bar{\lambda}_t =
\begin{cases}
\mathcal O\left(\bar{\lambda}_N\right), &\mbox{ if }\underline{\rho} > \gamma,\\
\mathcal O(\gamma^N), &\mbox{ if }\overline{\rho} \le \gamma.
\end{cases}
$$
by applying Lemma \ref{thm:germ} (more precisely, by applying the bound \eqref{eq:fast} with $r_t=t\bar{\lambda}_t$ and $\theta=\gamma$).
Thus,
$$
\|\Delta Q_N\|_\infty = \begin{cases}
\mathcal O((\bar{\lambda}_N \lor 1/N)),&\mbox{ if }\underline{\rho} > \gamma,\\
\mathcal O(1/N),&\mbox{ if }\overline{\rho} \le \gamma.
\end{cases}
$$

Applying Lemma \ref{thm:kozuno} and grouping the various convergence rates of $\Lambda_N$ then yields the bounds on $\|Q^{\pi_N}-Q^*\|_\infty$ claimed in the Theorem.
\end{proof}

\subsection{Relation between error in Q function and error in value function}
\begin{lemma}
\label{thm:q-v-bound}
For Q functions $Q_1, Q_2 \in \mathbb{R}^{\mathcal{S} \times \mathcal{A}}$ with associated value functions $V_1,V_2 \in \mathbb{R}^{\mathcal{S}}$, it holds that
\begin{equation*}
    \|Q_1 - Q_2\|_\infty \leq \gamma \|V_1 - V_2\|_\infty.
\end{equation*}
\end{lemma}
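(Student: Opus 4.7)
The plan is to interpret the phrase ``associated value functions'' in the sense used throughout Section~\ref{sec:preliminaries}: for each $i\in\{1,2\}$, the Q-function $Q_i$ is obtained from the value function $V_i$ via
\[
Q_i(s,a) \;=\; r(s,a) + \gamma\,\mathbb{E}_{s'\sim P(\cdot|s,a)}[V_i(s')],
\]
i.e.\ $Q_i = Q_{V_i}$ in the notation introduced just after \eqref{eq:v-bellman}. With this reading the result reduces to a straightforward one-step contraction argument.

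First, I would subtract the two defining equations for $Q_1$ and $Q_2$; the reward term cancels and linearity of expectation yields
\[
Q_1(s,a) - Q_2(s,a) \;=\; \gamma\,\mathbb{E}_{s'\sim P(\cdot|s,a)}\bigl[V_1(s') - V_2(s')\bigr]
\]
for every state--action pair $(s,a)$. Next, I would apply $|\mathbb{E}[X]|\le \mathbb{E}[|X|]$ and dominate the integrand pointwise by $\|V_1-V_2\|_\infty$, using that $P(\cdot|s,a)$ is a probability distribution. This gives the pointwise inequality $|Q_1(s,a)-Q_2(s,a)| \le \gamma\,\|V_1-V_2\|_\infty$, and taking the supremum over $(s,a)\in\mathcal S\times\mathcal A$ on the left delivers the claimed bound.

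There is no real obstacle here: the only nontrivial content is fixing the interpretation of ``associated'' (which the paper has already pinned down through the operator $Q_V$), after which the proof is a two-line computation relying on linearity of expectation and the fact that $P(\cdot|s,a)\in\Delta_{\mathcal S}$. No extra assumption on $V_1,V_2$ beyond boundedness (needed for the $\ell_\infty$ norm to be finite) is required.
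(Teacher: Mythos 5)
Your argument is correct and is essentially identical to the paper's own proof: both subtract the defining relations $Q_i = Q_{V_i}$, cancel the reward, and bound $\gamma\,|P(\cdot|s,a)^\top(V_1-V_2)|$ by $\gamma\,\|V_1-V_2\|_\infty$ using that $P(\cdot|s,a)$ is a probability distribution (the paper phrases this as a H\"older-type $\ell_1$--$\ell_\infty$ bound, you as Jensen plus pointwise domination — the same estimate). Nothing further is needed.
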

\begin{proof}
For all $s,a \in \mathcal{S} \times \mathcal{A}$
\begin{eqnarray*}
\begin{split}
    |Q_1(s,a) - Q_2(s,a)|& = |r(s,a) - \gamma P(\cdot|s,a)^T V_1 - r(s,a) - \gamma P(\cdot|s,a)^T V_2| \\
    &= \gamma |P(\cdot|s,a)^T(V_1 - V_2)| \leq \gamma \|P(\cdot|s,a)\|_1 \|V_1 - V_2\|_\infty = \gamma \|V_1 - V_2\|_\infty,
\end{split}
\end{eqnarray*}
where the inequality is due to application of Cauchy-Schwartz inequality and we also used the fact that $\|P(\cdot|s,a)\|_1 = 1$ for all for all $s,a \in \mathcal{S} \times \mathcal{A}$ since $P$ is a transition matrix.
\end{proof}

\end{document}